\pgfplotsset{compat=1.5, every axis/.append style={font=\small, /pgf/number format/1000 sep={}}}
\pgfplotsset{every mark/.append style={solid},} \usepackage{tikzscale}
\let\captiontemp\@makecaption\makeatother
\let\@makecaption\captiontemp\makeatother
\ifcvprfinal\pagestyle{empty}\fi
\DeclareMathOperator*{\argmin}{arg\,min}
\DeclareMathOperator{\vMF}{vMF}
\DeclareMathOperator{\PN}{PN}
\DeclareMathOperator{\qPN}{qPN}
\newcommand{\bI}{\mathbf{I}}
\newcommand{\bK}{\mathbf{K}}
\newcommand{\bR}{\mathbf{R}}
\newcommand{\ba}{\mathbf{a}}
\newcommand{\bbb}{\mathbf{b}}
\newcommand{\bc}{\mathbf{c}}
\newcommand{\bd}{\mathbf{d}}
\newcommand{\bbf}{\mathbf{f}}
\newcommand{\bp}{\mathbf{p}}
\newcommand{\br}{\mathbf{r}}
\newcommand{\bt}{\mathbf{t}}
\newcommand{\bx}{\mathbf{x}}
\newcommand{\bO}{\mathbf{O}}
\newcommand{\sC}{\mathcal{C}}
\newcommand{\sN}{\mathcal{N}}
\newcommand{\sV}{\mathcal{V}}
\newcommand{\sS}{\mathcal{S}}
\newcommand{\sP}{\mathcal{P}}
\newcommand{\sF}{\mathcal{F}}
\newcommand{\bbR}{\mathbb{R}}
\newcommand{\bbS}{\mathbb{S}}
\newcommand{\bmu}{\boldsymbol{\mu}}
\newcommand{\btheta}{\boldsymbol{\theta}}
\newcommand{\defeq}{\triangleq}
\newcommand{\transpose}{^{\intercal}}
\DeclareRobustCommand{\quartiles}[3]{#2\hspace{1pt}\rlap{\textsubscript{#1}}{\textsuperscript{\raisebox{1pt}{#3}}}}
\DeclareRobustCommand{\overbar}[1]{\mkern 2mu\overline{\mkern-2mu#1}}
\DeclareRobustCommand{\underbar}[1]{\underline{#1\mkern-2mu}\mkern 2mu}
\let\OldStatex\Statex
\renewcommand{\Statex}[1][3]{
  \setlength\@tempdima{\algorithmicindent}
  \OldStatex\hskip\dimexpr#1\@tempdima\relax}
\newtheorem{theorem}{Theorem}
\newtheorem{lemma}{Lemma}
\newcommand{\figref}[1]{Figure~\ref{#1}}
\newcommand{\eqnref}[1]{(\ref{#1})}
\newcommand{\tabref}[1]{Table~\ref{#1}}
\newcommand{\algoref}[1]{Algorithm~\ref{#1}}
\newcommand{\lineref}[1]{line~\ref{#1}}
\begin{document}

\title{The Alignment of the Spheres:\\Globally-Optimal Spherical Mixture Alignment for Camera Pose Estimation}

\author{Dylan Campbell\textsuperscript{1}, Lars Petersson\textsuperscript{1,2}, Laurent Kneip\textsuperscript{3}, Hongdong Li\textsuperscript{1} and Stephen Gould\textsuperscript{1}\\
\begin{minipage}{0.31\textwidth}
\centering\textsuperscript{1}Australian National University\\
\centering{\tt\small firstname.lastname@anu.edu.au\vphantom{p}}
\end{minipage}\hspace{12pt}
\begin{minipage}{0.31\textwidth}
\centering\textsuperscript{2}Data61/CSIRO\\
\centering{\tt\small lars.petersson@data61.csiro.au}
\end{minipage}\hspace{12pt}
\begin{minipage}{0.31\textwidth}
\centering\textsuperscript{3}ShanghaiTech\\
\centering{\tt\small lkneip@shanghaitech.edu.cn}
\end{minipage}\vspace{1pt}}

\maketitle

\begin{abstract}
Determining the position and orientation of a calibrated camera from a single image with respect to a 3D model is an essential task for many applications. When 2D--3D correspondences can be obtained reliably, perspective-n-point solvers can be used to recover the camera pose. However, without the pose it is non-trivial to find cross-modality correspondences between 2D images and 3D models, particularly when the latter only contains geometric information. Consequently, the problem becomes one of estimating pose and correspondences jointly. Since outliers and local optima are so prevalent, robust objective functions and global search strategies are desirable. Hence, we cast the problem as a 2D--3D mixture model alignment task and propose the first globally-optimal solution to this formulation under the robust $L_2$ distance between mixture distributions. We search the 6D camera pose space using branch-and-bound, which requires novel bounds, to obviate the need for a pose estimate and guarantee global optimality. To accelerate convergence, we integrate local optimization, implement GPU bound computations, and provide an intuitive way to incorporate side information such as semantic labels. The algorithm is evaluated on challenging synthetic and real datasets, outperforming existing approaches and reliably converging to the global optimum.
\end{abstract}

\section{Introduction}
\label{sec:introduction}

Estimating the pose of a calibrated camera given a single image and a 3D model, is useful for many applications, including object recognition \cite{aubry2014seeing}, motion segmentation \cite{olson2001general}, augmented reality \cite{marchand2016pose}, and localization \cite{fischler1981random, kneip2015sdicp}.
The problem can be cast as a 2D--3D alignment problem in the image plane or on the unit sphere. The task is to find the rotation and translation that aligns the projection of a 3D model with the 2D image data, using points \cite{david2004softposit, campbell2018globally}, lines \cite{brown2015globally}, silhouettes \cite{cheung2003visual}, or mixture models \cite{baka2014oriented}.
This is visualized in \figref{fig:spherical_alignment} for mixture models on the unit sphere.

\begin{figure}[!t]\centering
    \def\svgwidth{\columnwidth}
    \input{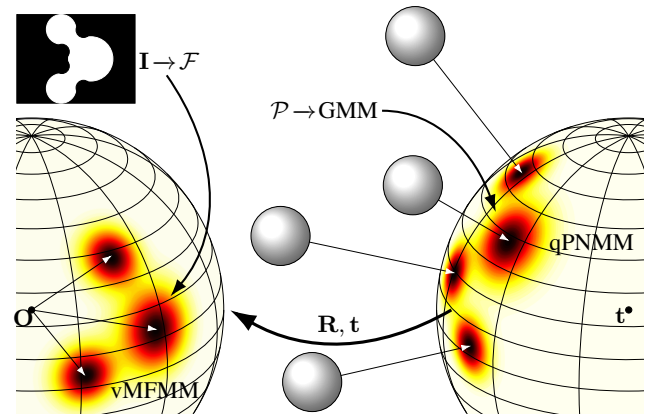}
	\caption{Spherical mixture alignment for estimating the 6-DoF absolute pose ($\bR, \bt$) of a camera from a single image $\bI$, relative to a 3D model (\eg point-set $\sP$), without 2D--3D correspondences. Our algorithm recovers the transformation by generating mixture distributions from the data --- a von Mises--Fisher Mixture Model (vMFMM) from the image via a bearing vector set $\sF$ and a Gaussian Mixture Model (GMM) from the 3D model, projected onto the sphere as a quasi-Projected Normal Mixture Model (qPNMM) --- and applying branch-and-bound with tight novel bounds to find $\bR$ and $\bt$ that optimally aligns these spherical mixtures.
	}
	\label{fig:spherical_alignment}
\end{figure}

When 2D--3D correspondences are known, this becomes the well-studied Perspective-\emph{n}-Point (P\emph{n}P) problem \cite{lepetit2009epnp,hesch2011direct}. However, correspondences between 2D and 3D modalities can be difficult to estimate, not least for the general case of aligning an image with a texture-less 3D model. Even when the model contains visual information, such as SIFT features \cite{lowe2004distinctive}, repetitive elements, occlusions, and appearance variations due to lighting and weather make the correspondence problem non-trivial.
Methods that solve for pose and correspondences jointly avoid these problems. They include local optimization approaches \cite{david2004softposit, moreno2008pose}, which can only yield correct results when a good pose prior is provided, and randomized global search \cite{fischler1981random}, which becomes computationally intractable as the problem size increases.
In contrast, globally-optimal approaches \cite{brown2015globally, campbell2018globally} obviate the need for pose priors and guarantee optimality.

This work proposes the first globally-optimal solution to the 2D--3D mixture alignment problem for camera pose estimation, depicted in \figref{fig:spherical_alignment}. The algorithm optimizes the robust $L_2$ density distance and guarantees global optimality by using the branch-and-bound framework, addressing the twin challenges of outliers and non-convexity.
It provides a geometric solution without assuming that correspondences, pose priors, or training data are available.

The primary contributions are
\begin{enumerate*}[label=(\roman*)]
    \item a new closed-form mixture distribution on the sphere, the quasi-Projected Normal mixture, that approximates the projection of a 3D Gaussian mixture;
    \item a new robust objective function, the $L_2$ distance between von Mises--Fisher and quasi-Projected Normal mixture distributions;
    \item an extension of the objective function to exploit information from deep networks to accelerate convergence;
    \item a fast local optimization algorithm using the objective function and closed-form gradient;
    \item novel bounds on the objective function; and
    \item a globally-optimal algorithm for camera pose estimation, with bound computations implemented on the GPU.
\end{enumerate*}

An advantage of this approach is that aligning densities is closer to the fundamental 2D--3D problem of aligning physical and imaged surfaces than aligning discrete point samples, since densities model the underlying surfaces with arbitrarily accurate estimates \cite{devroye1987course}, albeit at the limit.
Another advantage is that it leverages the adaptive compression properties of mixture model clustering algorithms, enabling the processing of large noisy point-sets.
In addition, the continuous objective function admits the use of local gradient-based optimization, which greatly expedites convergence.
The algorithm can also be applied to a wide range of 3D data, including mesh and volume representations as well as point-sets.
Finally, the approach solves the problem of extracting geometrically-meaningful elements in 2D and 3D by (optionally) using semantic information during optimization.
This simple but effective extension reduces runtime and susceptibility to degenerate poses, using only easily-obtainable information.

\section{Related Work}
\label{sec:related_work}

When 2D--3D correspondences are known, P\emph{n}P solvers \cite{lepetit2009epnp, hesch2011direct} can accurately estimate the camera pose.
However, outliers are almost always present in the correspondence set. When this is the case, the inlier set can be retrieved using RANSAC \cite{fischler1981random} or robust global optimization \cite{enqvist2008robust, ask2013optimal, enqvist2015tractable, svarm2016city}.
Some of these approaches \cite{fischler1981random, enqvist2008robust} can be applied when correspondences are not available by providing all possible permutations of the correspondence set. However this hard combinatorial problem quickly becomes infeasible.
Matching and filtering techniques have also been developed for large-scale localization problems to reduce the number of outliers in the initial set \cite{sattler2011fast, li2012worldwide, zeisl2015camera, enqvist2015tractable, svarm2016city, sattler2017efficient}.
These methods are only practical when 2D--3D correspondences can be found and so are mostly used with Structure-from-Motion (SfM) point-sets. Each 3D point in these datasets is at a visually-distinctive location and is augmented with an image feature, simplifying the correspondence problem. This is not the case for standard point-sets, which contain only geometric information.

The problem is more complex when correspondences are not available at the outset.
Local optimization approaches include SoftPOSIT \cite{david2004softposit}, which iterates between solving for correspondences and solving for pose, and 2D/3D GMM registration \cite{baka2014oriented}, which projects 3D points into the camera plane then applies 2D Gaussian mixture alignment. This formulation treats points close to the camera identically to distant points and so neglects 3D scale information and creates false optima. Moreover, these methods only find locally-optimal solutions within the convergence basin of the provided pose prior.
To alleviate this, global optimization approaches have been proposed, including random-start local search \cite{david2004softposit} and BlindPnP \cite{moreno2008pose}, which uses Kalman filtering to search over a probabilistic pose prior.
RANSAC and variants \cite{grimson1990object} do not require a pose prior, but are only tractable for small numbers of points and outliers.
Other approaches use regression forests or convolutional neural networks to learn 2D--3D correspondences from the data and thereby regress pose \cite{shotton2013scene, kendall2015posenet, brachmann2017dsac, kendall2017geometric}.
These methods require a large training set of pose-labeled images, do not localize the camera with respect to an explicit 3D model, and cannot guarantee optimality.

Globally-optimal approaches can provide this guarantee without needing a pose estimate. They certify that the computed camera pose is a global optimizer of the objective function. The Branch-and-Bound (BB) \cite{land1960automatic} algorithm has been widely used for this purpose, with tractability continuing to be a significant impediment. For example, BB has been used for 2D--2D registration \cite{breuel2003implementation}, relative pose estimation \cite{hartley2009global}, 3D--3D rotational registration \cite{li20073d}, 3D--3D registration with known correspondences \cite{olsson2009branch}, full 3D--3D registration \cite{yang2016goicp}, and robust 3D--3D registration \cite{campbell2016gogma}.

For 2D--3D registration, Brown~\etal \cite{brown2015globally} proposed a globally-optimal method using BB with a geometric error. Trimming was used to make the objective function robust to outliers. However this requires knowing the true outlier fraction in advance; if incorrectly specified, the optimum may not occur at the correct pose.
Campbell \etal \cite{campbell2017globally, campbell2018globally} proposed a globally-optimal inlier set cardinality maximization solution to the problem. While robust, this objective function is discrete and challenging to optimize, and operates on sampled points instead of the underlying surfaces.

Our work is the first globally-optimal $L_2$ density distance minimization solution to the camera pose estimation problem. It removes the assumptions that correspondences, training data or pose priors are available and is guaranteed to find the optimum of a robust objective function.

\section{Probability Distributions on the Sphere}
\label{sec:distributions}

2D directional data such as bearing vectors can be represented as points on the unit 2-sphere. These can be treated as samples from an underlying probability distribution in $\bbS^2$. For images, this distribution models the projection of visible surfaces onto the sphere.
In this section, we will outline the probability distributions used in this work and derive a closed-form approximation for the last.
The distributions referred to in this paper are summarized in \tabref{tab:distributions_acronyms}.

\begin{table}[!t]\centering
	\caption{Probability distributions in $\bbR^3$ and $\bbS^2$.}
	\label{tab:distributions_acronyms}
	\newcolumntype{C}{>{\centering\arraybackslash}X}
	\renewcommand*{\arraystretch}{1.3}
	\setlength{\tabcolsep}{2pt}
	\begin{tabularx}{\columnwidth}{@{}l C C C@{}}\hline
		Distribution & Notation & Parameters & Manifold\\\hline
		Gaussian & $\sN$ & $\bmu, \sigma^2$ & $\bbR^3$\\
		Projected Normal & $\PN$ & $\bmu, \sigma^2$ & $\bbR^3$\\
		quasi-Projected Normal & $\qPN$ & $\bmu, \sigma^2$ & $\bbS^2$\\
		von Mises--Fisher & $\vMF$ & \hspace{-3pt}$\hat{\bmu}, \kappa$ & $\bbS^2$\\\hline
	\end{tabularx}
\end{table}

The von Mises--Fisher distribution (vMF) \cite{fisher1953dispersion}, visualized in Figure~\ref{fig:distributions_vmf}, is the spherical analog of the isotropic Gaussian distribution and has a closed form in 3D, unlike more expressive non-isotropic distributions \cite{kent1982fisher}. The probability density function of the vMF distribution in 3D is
\begin{equation}
\label{eqn:vMF}
\vMF\left(\bbf \mid \hat{\bmu}, \kappa\right) = \frac{\exp (\kappa \hat{\bmu}\transpose \bbf)}{2\pi Z(\kappa)}
\end{equation}
for the random unit bearing vector $\bbf$, mean direction $\hat{\bmu}$, and concentration $\kappa > 0$, and where
\begin{equation}
\label{eqn:Z}
Z(\kappa) = \left(\exp\left(\kappa\right) - \exp\left(-\kappa\right)\right)\kappa^{-1}\text{.}
\end{equation}

The Projected Normal (PN) distribution \cite{mardia1972statistics, watson1983statistics, wang2013directional} is the projection of a Gaussian distribution $\sN$ onto the sphere. That is, if a random variable $\bp$ follows a Gaussian distribution, then the bearing vector $\bbf = \bp/\| \bp \|$ follows a PN distribution. For a Gaussian mixture that models the distribution of 3D surfaces in a scene, the associated PN mixture models the scene as observed by a 2D sensor, albeit without visibility constraints.
The probability density function of the isotropic PN distribution in 3D \cite{pukkila1988pattern} is
\begin{equation}
\label{eqn:PN}
\PN(\bbf \mid \bmu, \sigma^2) = \frac{e^{\frac{-\rho^2}{2}}}{2\pi} \! \left[  \frac{\alpha}{\sqrt{2\pi}} \!+\! \Phi\left(\alpha\right) e^{\frac{\alpha^2}{2}} \left( 1 \!+\! \alpha^2 \right) \right]
\end{equation}
for the bearing vector $\bbf$, mean position $\bmu \in \bbR^3$, and variance $\sigma^2$, and where $\rho = \|\bmu\| / \sigma$, $\alpha = \rho \bmu\transpose\bbf / \|\bmu\|$, and $\Phi(\cdot)$ is the cumulative distribution function of $\sN$.

While PN is the true distribution, it does not have a closed form. Moreover, similarity measures between PN distributions, such as the $L_2$ distance, are not tractable to compute, since they do not simplify to a closed form when integrated over the sphere and would therefore require time-consuming numerical integration. As a result, it is impractical for alignment problems. Instead, we propose a new closed-form distribution, the \emph{quasi-Projected Normal} (qPN) distribution, that approximates a PN with a vMF distribution. Its probability density function is given by
\begin{equation}
\label{eqn:qPN}
\qPN\left(\bbf \mid \bmu, \sigma^2 \right) = \vMF\left(\bbf \;\middle|\; \frac{\bmu}{\|\bmu\|}, \left(\frac{\|\bmu\|}{\sigma}\right)^2 \!+\! 1\right)\text{.}
\end{equation}
This was derived by equating the vMF and PN density functions at $\bbf = \hat{\bmu} = \bmu / \|\bmu\|$, since they should evaluate to the same value in the direction of the mean vector.
This gives
\begin{equation}
\label{eqn:vMF_PN_equated}
\frac{\kappa}{2\pi\left(1 - e^{-2\kappa}\right)} = \frac{e^{\frac{-\rho^2}{2}}}{2\pi} \!\left[ \frac{\rho}{\sqrt{2\pi}} + \Phi\left(\rho\right) e^{\frac{\rho^{2}}{2}} \left( 1 \!+\! \rho^{2} \right) \right]
\end{equation}
which simplifies as $\kappa \to \infty$ and $\rho = \|\bmu\| / \sigma \to \infty$ to
\begin{equation}
\label{eqn:vMF_PN_map}
\kappa = \left(\frac{\|\bmu\|}{\sigma}\right)^{\!\!2} + 1\text{.}
\end{equation}
While this derivation only proves equality in the limit in the direction of the mean vector, the empirical results in Figure~\ref{fig:qpn_pdf} show that the distributions are very similar across the entire angular range, even for low values of $\rho$.

\begin{figure}[!t]
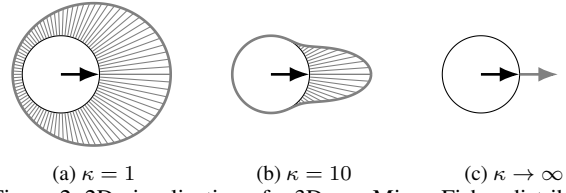
\centering
	\begin{subfigure}[]{0.33\columnwidth}\centering
		\input{vmf1.tex}
		\caption{$\kappa = 1$}
		\label{fig:distributions_vmf1}
	\end{subfigure}\hfill
	\begin{subfigure}[]{0.33\columnwidth}\centering
		\input{vmf2.tex}
		\caption{$\kappa = 10$}
		\label{fig:distributions_vmf2}
	\end{subfigure}\hfill
	\begin{subfigure}[]{0.33\columnwidth}\centering
\begin{tikzpicture}
\clip(1.3,1.0) rectangle (3.5,2.95);
\begin{polaraxis}[
width=2\textwidth,
hide x axis,
hide y axis,
]

\addplot[color=white, mark=none, data cs=cart] coordinates {(-0.13, -0.19) (0.29, -0.19) (0.29, 0.19) (-0.13, 0.19) (-0.13, -0.19)};

\addplot[color=black, mark=none, domain=0:360, samples=361] {0.1};
\addplot [polar comb, color=black, line width=1pt, -Latex] coordinates {(0,0.1)};
\addplot [color=gray, line width=1pt, -Latex, data cs=cart] coordinates {(0.1,0) (0.2,0)};

\end{polaraxis}
\end{tikzpicture}%
		\caption{$\kappa \to \infty$}
		\label{fig:distributions_vmf4}
	\end{subfigure}
	\caption{2D visualization of a 3D von Mises--Fisher distribution as the concentration parameter $\kappa$ increases. As $\kappa \to \infty$, the distribution approaches a delta function on the sphere.}
	\label{fig:distributions_vmf}
\end{figure}

\begin{figure}[!t]\centering
	\begin{subfigure}[]{0.5\columnwidth}\centering
		\input{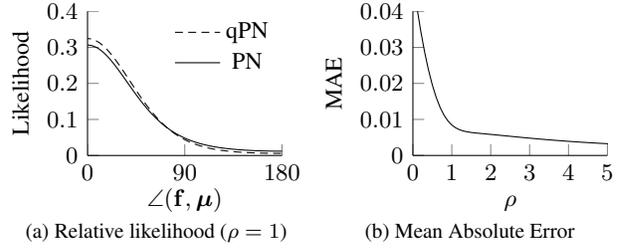}
		\vspace{-18pt}
		\caption{Relative likelihood ($\rho = 1$)}
		\label{fig:qpn_pdf_comparison}
	\end{subfigure}\hfill
	\begin{subfigure}[]{0.5\columnwidth}\centering
		\begin{tikzpicture}
\begin{axis}[
scaled ticks=false,
tick label style={/pgf/number format/fixed},
width=\textwidth,
height=3.5cm,
xmin=0,
xmax=5,
xtick={0, 1, ..., 5},
xlabel={$\rho \vphantom{\angle(\bbf, \bmu)}$},
xlabel shift=-4pt,
ymin=0,
ymax=0.04,
ytick={0, 0.01, ...,0.04},
ylabel={MAE},
axis x line*=bottom,
axis y line*=left,
]


\addplot [black] table [row sep=newline]{
0.100000000000000	0.0413191281236007
0.200000000000000	0.0346139984696496
0.300000000000000	0.0287655282865310
0.400000000000000	0.0237540068813730
0.500000000000000	0.0195422603081568
0.600000000000000	0.0160829910864740
0.700000000000000	0.0133195988920811
0.800000000000000	0.0111819311945709
0.900000000000000	0.00959030244918633
1	0.00845004251628531
1.10000000000000	0.00766221054548387
1.20000000000000	0.00713263091784733
1.30000000000000	0.00678074719239092
1.40000000000000	0.00654514421515479
1.50000000000000	0.00637890241051719
1.60000000000000	0.00625222568688656
1.70000000000000	0.00614385236628324
1.80000000000000	0.00604225040266751
1.90000000000000	0.00594325703404724
2	0.00584243421372227
2.10000000000000	0.00573946498348980
2.20000000000000	0.00563494957933974
2.30000000000000	0.00552965882016238
2.40000000000000	0.00542408740054571
2.50000000000000	0.00531819228966805
2.60000000000000	0.00521173034184175
2.70000000000000	0.00510893640257420
2.80000000000000	0.00500440880697224
2.90000000000000	0.00490423025006244
3.00000000000000	0.00480464961892366
3.10000000000000	0.00470596094922317
3.20000000000000	0.00460899176414898
3.30000000000000	0.00451506785319796
3.40000000000000	0.00442358842394028
3.50000000000000	0.00433382063316873
3.60000000000000	0.00424710238945909
3.70000000000000	0.00416346872314328
3.80000000000000	0.00408199500296547
3.90000000000000	0.00400204393072496
4	0.00392277575732362
4.10000000000000	0.00384711043735623
4.20000000000000	0.00377563566639987
4.30000000000000	0.00370306328274511
4.40000000000000	0.00363438691018970
4.50000000000000	0.00356863657853114
4.60000000000000	0.00350096684493657
4.70000000000000	0.00344123882469621
4.80000000000000	0.00337624008011355
4.90000000000000	0.00332126663698587
5.00000000000000	0.00326021275980349
5.10000000000000	0.00320831280363775
5.20000000000000	0.00315218952938847
5.30000000000000	0.00310128823298805
5.40000000000000	0.00305087424444898
5.50000000000000	0.00299849472816950
5.60000000000000	0.00295442142273135
5.70000000000000	0.00290599225929935
5.80000000000000	0.00286044892770108
5.90000000000000	0.00281913064321619
6	0.00277412406175830
6.10000000000000	0.00273221253592677
6.20000000000000	0.00269484671225199
6.30000000000000	0.00265427356087413
6.40000000000000	0.00261227041931178
6.50000000000000	0.00257972468941359
6.60000000000000	0.00254431821113954
6.70000000000000	0.00250632851637804
6.80000000000000	0.00247084860356331
6.90000000000000	0.00244116499294787
7	0.00240910028589179
7.10000000000000	0.00237487578991004
7.20000000000000	0.00234078460131801
7.30000000000000	0.00231503276864338
7.40000000000000	0.00228723026026129
7.50000000000000	0.00225755230706364
7.60000000000000	0.00222617117499046
7.70000000000000	0.00219797456116073
7.80000000000000	0.00217507443548226
7.90000000000000	0.00215049223483861
8	0.00212436344258786
8.10000000000000	0.00209682136295644
8.20000000000000	0.00206799676653635
8.30000000000000	0.00204845650327287
8.40000000000000	0.00202794580224906
8.50000000000000	0.00200610148667098
8.60000000000000	0.00198302589451114
8.70000000000000	0.00195881974468721
8.80000000000000	0.00193358190998049
8.90000000000000	0.00191431249473836
9	0.00189725321466546
9.10000000000000	0.00187908073990816
9.20000000000000	0.00185987160506170
9.30000000000000	0.00183970132685681
9.40000000000000	0.00181864425438302
9.50000000000000	0.00179677342856303
9.60000000000000	0.00177741929741420
9.70000000000000	0.00176350894505390
9.80000000000000	0.00174868562616075
9.90000000000000	0.00173300508406668
10	0.00171652245632889
};

\end{axis}
\end{tikzpicture}
		\vspace{-18pt}
		\caption{Mean Absolute Error}
		\label{fig:qpn_pdf_mae}
	\end{subfigure}
	\caption{Comparison of the qPN and PN distributions.
		\subref{fig:qpn_pdf_comparison} The qPN and PN probability density functions are plotted with respect to the angle $\angle(\bbf, \bmu)$ for $\rho = \|\bmu\| / \sigma = 1$.
		The distributions are very similar, even for this small value of $\rho$. \subref{fig:qpn_pdf_mae} The Mean Absolute Error (MAE) across the entire angular range is plotted with respect to $\rho$ and is less than $0.01$ for all $\rho \geqslant 1$.}
	\label{fig:qpn_pdf}
\end{figure}

\section{Spherical Mixture Alignment}
\label{sec:alignment}

The alignment of mixture distributions to estimate relative sensor pose
is a well-studied problem in $\bbR^2$, $\bbR^3$ \cite{chui2000feature, tsin2004correlation, jian2011robust, campbell2016gogma}, and the sphere $\bbS^2$ \cite{straub2017efficient}.
For 2D--3D camera pose estimation, we require a 3D positional and a 2D directional mixture distribution to model the input data. We model the distribution of 3D points $\bp \in \bbR^3$ in the set $\sP = \{\bp_i\}_{i=1}^{N_1}$ as a Gaussian Mixture Model (GMM). Let $\btheta_{1} = \left\{ \bmu_{1i}, \sigma_{1i}^2, \phi_{1i} \right\}_{i = 1}^{n_1}$ be the parameter set of the $n_1$-component GMM with means $\bmu_{1i} \in \bbR^3$, variances $\sigma_{1i}^2$, and mixture weights $\phi_{1i} \geqslant 0$, where $\sum \phi_{1i} = 1$, with density
\begin{equation}
\label{eqn:GMM}
p\left(\bp \mid \btheta_1\right) = \sum_{i = 1}^{n_1} \phi_{1i} \sN \left(\bp \mid \bmu_{1i}, \sigma_{1i}^2 \right)\text{.}
\end{equation}
We also require a projection of this distribution onto the sphere. For this, we use the qPN Mixture Model (qPNMM) associated with this GMM, with density
\begin{equation}
\label{eqn:qPNMM}
p\left(\bbf \mid \btheta_1\right) = \sum_{i = 1}^{n_1} \phi_{1i} \qPN \left(\bbf \mid \bmu_{1i}, \sigma_{1i}^2 \right)\text{.}
\end{equation}
Finally, we model the distribution of bearing vectors $\bbf \in \bbS^2$ in the set $\sF = \{\bbf_i\}_{i=1}^{N_2}$ as a vMF Mixture Model (vMFMM) \cite{gopal2014mises,straub2015small}.
Let $\btheta_{2} = \{\hat{\bmu}_{2j} , \kappa_{2j}, \phi_{2j} \}_{j = 1}^{n_2}$ be the parameter set of the $n_2$-component vMFMM with mean directions $\hat{\bmu}_{2j} \in \bbS^{2}$, concentrations $\kappa_{2j} > 0$, and mixture weights $\phi_{2j} \geqslant 0$, where $\sum \phi_{2j} = 1$, with density
\begin{equation}
\label{eqn:vMFMM}
p\left(\bbf \mid \btheta_2\right) = \sum_{j = 1}^{n_2} \phi_{2j} \vMF\left(\bbf \mid \hat{\bmu}_{2j}, \kappa_{2j} \right)\text{.}
\end{equation}
The bearing vector $\bbf \propto \bK^{-1}\hat{\bx}$ corresponds to a 2D point with homogeneous coordinates $\hat{\bx}$ imaged by a calibrated camera, with intrinsic camera matrix $\bK$.
The above mixture distributions admit arbitrarily accurate estimates of noisy surface densities \cite{devroye1987course} and can be computed efficiently from the data \cite{campbell2015adaptive, kulis2012revisiting, straub2015small}.

The $L_2$ distance between probability densities is a robust objective function that can be used to measure the alignment of two sets of sensor data, given a specific transformation \cite{jian2011robust, straub2017efficient}. Unlike the Kullback--Leibler divergence, it is inherently robust to outliers \cite{scott2001parametric} and operates on statistical densities generated from the raw sensor data. The densities model the underlying surfaces of the scene, which is beneficial because the fundamental 2D--3D registration problem is a surface alignment problem, not a discrete sampled point alignment problem.

\begin{lemma}
\label{lm:objective_function}
($L_2$ objective function) The $L_2$ distance between qPNMM and vMFMM models with rotation $\bR \in SO(3)$ and translation $\bt \in \bbR^3$ can be minimized using the function
\begin{align}
\label{eqn:objective_function}
f(\bR,\bt) = &\sum_{i = 1}^{n_1} \sum_{j = 1}^{n_1} \frac{\phi_{1i} \phi_{1j}Z\left(K_{1i1j}(\bt)\right)}{Z(\kappa_{1i}(\bt)) Z(\kappa_{1j}(\bt))} \nonumber\\
&- 2\sum_{i = 1}^{n_1} \sum_{j = 1}^{n_2} \frac{\phi_{1i} \phi_{2j}Z\left(K_{1i2j}(\bR,\bt)\right)}{Z(\kappa_{1i}(\bt)) Z(\kappa_{2j})}
\end{align}
where
\begin{equation}
\label{eqn:K1i1j}
K_{1i1j}(\bt) = \left\|\kappa_{1i}(\bt) \frac{\bmu_{1i}-\bt}{ \|\bmu_{1i}-\bt\|} \!+\! \kappa_{1j}(\bt) \frac{\bmu_{1j}-\bt}{\|\bmu_{1j}-\bt\|}\right\|
\end{equation}
\begin{equation}
\label{eqn:K1i2j}
K_{1i2j}(\bR,\bt) = \left\|\kappa_{1i}(\bt) \bR \frac{\bmu_{1i}-\bt}{ \|\bmu_{1i}-\bt\|} + \kappa_{2j} \hat{\bmu}_{2j}\right\|
\end{equation}
\begin{equation}
\label{eqn:kappa1}
\kappa_{1i}(\bt) = \left(\frac{\|\bmu_{1i} - \bt\|}{\sigma_{1i}}\right)^{\!\!2} + 1
\end{equation}
and $Z(\cdot)$ is defined as given in \eqnref{eqn:Z}.
\end{lemma}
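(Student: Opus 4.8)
The plan is to expand the squared $L_2$ distance
$\|p-q\|_2^2 = \int_{\bbS^2}(p-q)^2\,\mathrm{d}\bbf = \int_{\bbS^2}p^2\,\mathrm{d}\bbf - 2\int_{\bbS^2}pq\,\mathrm{d}\bbf + \int_{\bbS^2}q^2\,\mathrm{d}\bbf$,
where $p$ is the qPNMM after the camera transformation $(\bR,\bt)$ has been applied and $q$ is the vMFMM. The term $\int q^2$ does not depend on $(\bR,\bt)$, and both remaining terms will turn out to carry a common factor $1/(2\pi)$; so $f$ will equal $2\pi\,\|p-q\|_2^2$ plus a constant, and since $2\pi>0$ minimizing $f$ is equivalent to minimizing the $L_2$ distance, which is all the lemma asserts. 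It therefore suffices to evaluate $\int p^2$ and $\int pq$ in closed form.

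First I would recast the transformed qPNMM as a vMF mixture. Under the pose $(\bR,\bt)$ a world point $\bp$ has bearing direction $\bR(\bp-\bt)/\|\bp-\bt\|$, and the isotropic Gaussian $\sN(\bmu_{1i},\sigma_{1i}^2)$ is pushed forward to $\sN(\bR(\bmu_{1i}-\bt),\sigma_{1i}^2)$, since rigid motions preserve isotropic covariances. Substituting $\bR(\bmu_{1i}-\bt)$ into the qPN definition \eqnref{eqn:qPN}, and using $\|\bR(\bmu_{1i}-\bt)\|=\|\bmu_{1i}-\bt\|$, shows that the $i$-th transformed component equals $\vMF(\bbf\mid \bR\hat{\bmu}_{1i}',\,\kappa_{1i}(\bt))$ with $\hat{\bmu}_{1i}' = (\bmu_{1i}-\bt)/\|\bmu_{1i}-\bt\|$ and $\kappa_{1i}(\bt)$ as in \eqnref{eqn:kappa1}. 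Hence both $p$ and $q$ are finite mixtures of vMF densities.

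Next I would establish the product-integral of two vMF densities. From \eqnref{eqn:vMF},
$\vMF(\bbf\mid\hat{\bmu}_a,\kappa_a)\,\vMF(\bbf\mid\hat{\bmu}_b,\kappa_b) = \exp\!\big((\kappa_a\hat{\bmu}_a+\kappa_b\hat{\bmu}_b)\transpose\bbf\big)\big/\big((2\pi)^2 Z(\kappa_a)Z(\kappa_b)\big)$;
writing $\bv = \kappa_a\hat{\bmu}_a+\kappa_b\hat{\bmu}_b$ and $K = \|\bv\|$, the exponent becomes $K\,(\bv/K)\transpose\bbf$, so $\int_{\bbS^2}\exp\!\big(K(\bv/K)\transpose\bbf\big)\,\mathrm{d}\bbf = 2\pi Z(K)$ because a $\vMF(\,\cdot\mid\bv/K,K)$ density integrates to one on $\bbS^2$ (and this persists at $K=0$ via the continuous extension $Z(0)=2$). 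Therefore $\int_{\bbS^2}\vMF(\bbf\mid\hat{\bmu}_a,\kappa_a)\,\vMF(\bbf\mid\hat{\bmu}_b,\kappa_b)\,\mathrm{d}\bbf = Z(K)\big/\big(2\pi Z(\kappa_a)Z(\kappa_b)\big)$.

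Finally I would expand the products of the mixtures term by term: $\int p^2$ becomes $\sum_{i,j}\phi_{1i}\phi_{1j}$ times the product-integral of the $i$-th and $j$-th transformed components, and $\int pq$ becomes $\sum_{i,j}\phi_{1i}\phi_{2j}$ times the product-integral of the $i$-th transformed component with the $j$-th vMFMM component. For the self term, orthogonality of $\bR$ gives $\|\bR(\kappa_{1i}\hat{\bmu}_{1i}'+\kappa_{1j}\hat{\bmu}_{1j}')\| = \|\kappa_{1i}\hat{\bmu}_{1i}'+\kappa_{1j}\hat{\bmu}_{1j}'\| = K_{1i1j}(\bt)$, so it depends on $\bt$ alone; for the cross term, $\|\kappa_{1i}\bR\hat{\bmu}_{1i}'+\kappa_{2j}\hat{\bmu}_{2j}\| = K_{1i2j}(\bR,\bt)$. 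Collecting the remaining factors and cancelling the common $1/(2\pi)$ reproduces \eqnref{eqn:objective_function}. The only substantive mathematical ingredient is the vMF product-integral, which is immediate once one notices that a product of von Mises--Fisher exponentials is again a scaled von Mises--Fisher exponential; the main thing requiring care is fixing the camera-pose convention so that the transformed mean direction is $\bR(\bmu_{1i}-\bt)/\|\bmu_{1i}-\bt\|$, together with the (trivial) observation that normalization constants and $\int q^2$ do not affect the minimizer.
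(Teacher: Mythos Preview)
Your proposal is correct and follows essentially the same approach as the paper: expand the squared $L_2$ distance into three terms, drop the $(\bR,\bt)$-independent $\int q^2$ and constant factors, use the qPN definition \eqnref{eqn:qPN} to write both mixtures as vMF mixtures, and evaluate each pairwise integral via $\int_{\bbS^2}\exp(\bx\transpose\bbf)\,\mathrm{d}\bbf = 2\pi Z(\|\bx\|)$. Your treatment is in fact more explicit than the paper's main text (which defers details to an appendix), including the observation that orthogonality of $\bR$ makes the self-term depend on $\bt$ alone.
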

\begin{proof}
Given qPNMM and vMFMM models of the input data and a rigid transformation function $T(\btheta_1, \bR, \bt) = \left\{ \bR(\bmu_{1i} - \bt), \sigma_{1i}^2, \phi_{1i} \right\}_{i = 1}^{n_1}$, the $L_2$ distance between densities for a rotation $\bR$ and translation $\bt$ is given by
\begin{align}
d_{L_2} &= \int_{\bbS^{2}} \left[ p\left( \bbf \mid T(\btheta_1, \bR, \bt) \right) - p\left( \bbf \mid \btheta_2 \right) \right]^{2}\,\mathrm{d}\bbf \label{eqn:l2_distance_0}\\
&= \int_{\bbS^{2}} \left[ p\left( \bbf \mid T(\btheta_1, \bR, \bt) \right)\right]^2  + \left[ p\left( \bbf \mid \btheta_2 \right) \right]^{2} \nonumber\\
&\hspace{15pt} - 2p\left( \bbf \mid T(\btheta_1, \bR, \bt) \right)p\left( \bbf \mid \btheta_2 \right) \,\mathrm{d}\bbf \text{.} \label{eqn:l2_distance}
\end{align}
The function \eqnref{eqn:objective_function} is obtained by removing constant summands and factors, substituting \eqnref{eqn:qPNMM}, \eqnref{eqn:vMFMM}, \eqnref{eqn:qPN} and \eqnref{eqn:vMF} into \eqnref{eqn:l2_distance}, and replacing integrals of the form $\int_{\bbS^2} \exp(\bx\transpose\bbf) \,\mathrm{d}\bbf$ with the normalization constant of a vMF density with $\kappa \!=\! \|\bx\|$ and $\hat{\bmu} \!=\! \bx / \kappa$; see appendix for details.
\end{proof}

The objective is then to find a rotation and translation that minimizes the $L_2$ distance between the densities
\begin{equation}
\label{eqn:optimization}
(\bR^{\ast},\bt^{\ast}) = \argmin_{\bR,\,\bt} f(\bR, \bt)\text{.}
\end{equation}

Furthermore, if semantic class labels are available, for example using CNN--based semantic segmentation
for 2D images \cite{ronneberger2015unet, chen2018encoder, rota2018inplace} and 3D point-sets 
\cite{landrieu2018large, huang2018recurrent, tatarchenko2018tangent},
the optimization problem can be formulated as a joint $L_2$ distance minimization over the semantic classes, providing semantic-aware alignment and accelerating convergence. That is, given a class label set $\Lambda$, one can construct separate mixture distributions for each class and solve
\begin{equation}
\label{eqn:optimization_semantic}
(\bR^{\ast},\bt^{\ast}) = \argmin_{\bR,\,\bt} \sum_{l\in\Lambda} \phi_{l} f_{l} (\bR, \bt)\text{.}
\end{equation}
where $\phi_{l} \geqslant 0$ are the class weights and $f_{l}$ is the per-class function value computed according to \eqnref{eqn:objective_function}.

\section{Branch-and-Bound}
\label{sec:bb}

To solve the non-convex $L_2$ distance problem \eqnref{eqn:optimization}, the Branch-and-Bound (BB) algorithm \cite{land1960automatic} may be applied. It requires an efficient way to branch the function domain and bound the function optimum for each branch, such that the bounds converge as the branch size tends to zero. The efficiency of the algorithm depends on both the computational complexity of the bounds and how tight they are, with tighter bounds allowing suboptimal branches to be pruned quicker which reduces the search space faster.

\subsection{Branching the Domain of Camera Poses}
\label{sec:bb_branching}

To find a pose that is globally-optimal, the $L_2$ objective function must be optimized over the entire domain of 3D camera poses, the group $SE(3) = SO(3) \times \bbR^3$. For BB, the domain must be bounded, so we restrict the space of translations to the set $\Omega_t$, under the assumption that the camera is a finite distance from the 3D model. The domains are shown in \figref{fig:bb_branching_domain}.
We minimally parameterize rotation space $SO(3)$ with angle-axis vectors $\br$ with rotation axis $\hat{\br} = \br/\|\br\|$ and angle $\|\br\|$. As a result, a solid ball with radius $\pi$ in $\bbR^3$ can represent the space of all 3D rotations.
To simplify branching, we take as the rotation domain $\Omega_r$ the 3D cube that circumscribes the $\pi$-ball \cite{li20073d}.
We denote the rotation matrix obtained from $\br$ using Rodrigues' rotation formula as $\bR_{\br}$.
The space of translations $\bbR^3$ is parameterized using 3-vectors within the cuboid $\Omega_t^{\prime}$, a bounded domain.
To avoid the non-physical case where the camera is located within a small value $\zeta$ of a 3D surface, the translation domain is restricted such that $\Omega_t = \Omega_t^{\prime} \cap \{\bt \in \bbR^3 \mid \|\bmu - \bt\| \geqslant \zeta, \forall \bmu \in \btheta_1\}$.
Finally, we branch the domain into 6D cuboids (6-orthotopes) $\sC_r \times \sC_t$ using an adaptive branching strategy that chooses to subdivide the rotation or translation dimensions based on which has the greater angular uncertainty, reducing redundant branching.
    
\begin{figure}[!t]\centering\hfill
    \begin{subfigure}[]{0.4\columnwidth}\centering
		\includegraphics[width=\columnwidth]{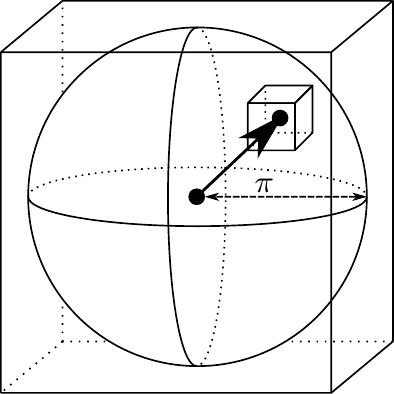}
		\caption{Rotation Domain $\Omega_r$}
		\label{fig:bb_branching_domain_rotation}
	\end{subfigure}\hfill
	\begin{subfigure}[]{0.4\columnwidth}\centering
		\includegraphics[width=\columnwidth]{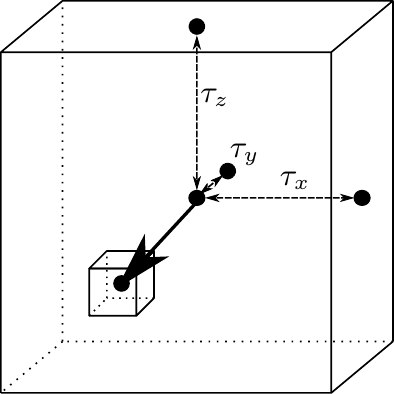}
		\caption{Translation Domain $\Omega_t$}
		\label{fig:bb_branching_domain_translation}
	\end{subfigure}\hfill\null
    \caption{Parameterization and branching of $SE(3)$. \protect\subref{fig:bb_branching_domain_rotation}~Rotations are parameterized by angle-axis vectors in a $\pi$ radius ball. \protect\subref{fig:bb_branching_domain_translation}~Translations are parameterized by vectors within a cuboid that has half-widths $[\tau_x, \tau_y, \tau_z]$. The joint domain is branched into 6D cuboids using an adaptive octree-like branching strategy.}
\label{fig:bb_branching_domain}
\end{figure}

\subsection{Bounding the $\boldsymbol{L_2}$ Objective Function}
\label{sec:bb_bounding}

The quality of the bounds is key to a successful branch-and-bound algorithm.
For $L_2$ distance minimization, we require bounds on the minimum of the objective function \eqnref{eqn:objective_function} within a transformation domain $\sC_r\times \sC_t$. An upper bound can be found by evaluating the function at any transformation in the branch.
A lower bound can be found using the bounds $\psi_r$ and $\psi_t$ on the rotation and translation uncertainty angles derived in Lemmas~3 and 5 in Campbell \etal \cite{campbell2018globally}, reproduced here as Lemmas~\ref{lm:rotation_uncertainty_angle} and \ref{lm:translation_uncertainty_angle}.
\begin{lemma}
\label{lm:rotation_uncertainty_angle}
(Rotation uncertainty angle bound) Given a 3D point $\bp$ and a rotation cube $\sC_r$ centered at $\br_0$ with surface $\sS_r$, then $\forall \br \in \sC_r$,
\begin{align}
\angle(\bR_\br \bp, \bR_{\br_0} \bp) &\leqslant \min\left\{\max_{\br \in \sS_{r}}\angle(\bR_{\br} \bp, \bR_{\br_0} \bp),\pi\right\}\nonumber\\
&\defeq \psi_{r}(\bp, \sC_r).\label{eqn:rotation_uncertainty_angle}
\end{align}
\end{lemma}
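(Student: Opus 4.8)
I would prove this by showing that the continuous function $g(\br)\defeq\angle(\bR_\br\bp,\bR_{\br_0}\bp)$ attains its maximum over the compact cube $\sC_r$ on the surface $\sS_r$, so that the explicit $\min\{\cdot,\pi\}$ in the statement is needed only to absorb the universal inequality $g\leqslant\pi$ in the saturated case. Since only the direction of $\bp$ matters, set $\hat\bp=\bp/\|\bp\|$, $\bc=\bR_{\br_0}\hat\bp$, and $F(\br)=\bR_\br\hat\bp\in\bbS^2$, so that $g(\br)=\angle(F(\br),\bc)$; a maximizer $\br^\ast\in\sC_r$ exists by continuity and compactness, and the task is to relocate it onto $\sS_r$.

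My primary approach is a submersion/openness argument. The map $F\colon\sC_r\to\bbS^2$ factors as $\br\mapsto\bR_\br\mapsto\bR_\br\hat\bp$, where $\br\mapsto\bR_\br$ is the Rodrigues exponential, a submersion onto $SO(3)$ (on $\Omega_r$ it is, since $\|\br\|<2\pi$ throughout), and $\bR\mapsto\bR\hat\bp$ is a submersion $SO(3)\to\bbS^2$ with circular fibres; hence $F$ is a submersion, so its restriction to $\mathrm{int}\,\sC_r$ is an open map. Therefore $F(\mathrm{int}\,\sC_r)$ is open in $\bbS^2$ and contained in the interior of the image $\mathcal{I}\defeq F(\sC_r)$, which forces every $\by\in\mathcal{I}$ with no preimage in $\mathrm{int}\,\sC_r$ — in particular every boundary point of $\mathcal{I}$ — to lie in $F(\sS_r)$. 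Now let $\by^\ast\in\mathcal{I}$ be a point of $\mathcal{I}$ farthest from $\bc$, so $\angle(\by^\ast,\bc)=\max_{\br\in\sC_r}g$. If $\by^\ast$ is not interior to $\mathcal{I}$, then $\by^\ast\in F(\sS_r)$ and $\max_{\sC_r}g\leqslant\max_{\sS_r}g\leqslant\max_{\sC_r}g$, i.e. equality. If $\by^\ast$ is interior to $\mathcal{I}$, it is a local maximizer of $\angle(\cdot,\bc)$ over an open subset of $\bbS^2$; a local maximizer in the smooth locus would be a critical point, and the only critical points of $\angle(\cdot,\bc)$ on the sphere are $\bc$ (a minimum) and $-\bc$, so $\by^\ast=-\bc$ and $\max_{\sC_r}g=\pi$ — exactly the regime the $\min\{\cdot,\pi\}$ cap covers. (In the branching of \secref{sec:bb_branching} every cube lies inside $\Omega_r$, and whenever $-\bc$ is reachable by $F$ on $\mathrm{int}\,\sC_r$ it is reachable on $\sS_r$ too, so $\max_{\sS_r}g=\pi$ there as well.)

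An alternative, more elementary route is to show directly that the restriction of $g$ to every segment $[\ba,\bbb]\subseteq\sC_r$ attains its maximum at an endpoint, i.e. that $g$ is quasiconvex along lines; then, drawing through any interior point a line out to two points of $\sS_r$, one obtains the surface bound at once. Substituting Rodrigues' formula $F(\br)=\hat\bp+\tfrac{\sin\theta}{\theta}(\br\times\hat\bp)+\tfrac{1-\cos\theta}{\theta^2}\,\br\times(\br\times\hat\bp)$ with $\theta=\|\br\|$, along $\br(t)=(1-t)\ba+t\bbb$ makes $\cos g=\bc\transpose F(\br(t))$ an explicit one-variable trigonometric function whose quasiconcavity — no interior strict local minimum — is what must be verified. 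This one-dimensional unimodality check, disentangling the competing effects of the factors $\sin\theta/\theta$ and $(1-\cos\theta)/\theta^2$ and of the spherical geometry as $\theta=\|\br(t)\|$ sweeps its range, is the main technical obstacle; the submersion argument is precisely a way of sidestepping it. In either route the skeleton is the same — compactness supplies the maximizer, an openness or unimodality property forces it onto $\sS_r$, and the explicit $\pi$-cap absorbs the degenerate antipodal case — and the only delicate point is confirming that in the intended use that last case never escapes the cap.
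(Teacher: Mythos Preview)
The paper does not give its own proof of this lemma: it is quoted verbatim from \cite{campbell2018globally} (their Lemma~3) and used as a black box. So there is no ``paper's proof'' to compare against here; I can only assess your argument on its own.

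Your submersion argument is clean and the case split is correct: $F$ is a submersion on $\mathrm{int}\,\sC_r$ (the Rodrigues map is a local diffeomorphism for $\|\br\|<2\pi$, and the orbit map $SO(3)\to\bbS^2$ is a submersion), so a farthest point $\by^\ast\in\mathcal{I}$ that lies on $\partial\mathcal{I}$ does come from $\sS_r$, and an interior $\by^\ast$ must be the antipode $-\bc$. The gap is your parenthetical assertion that ``whenever $-\bc$ is reachable by $F$ on $\mathrm{int}\,\sC_r$ it is reachable on $\sS_r$ too.'' This is exactly what you would still need to prove, and it is false in general. Take $\sC_r=\Omega_r=[-\pi,\pi]^3$, $\br_0=\mathbf{0}$, and $\hat\bp=(1,1,1)/\sqrt{3}$, so $\bc=\hat\bp$. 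The full preimage $F^{-1}(-\bc)\cap\Omega_r$ is the circle $\{\pi\hat\bv:\hat\bv\in\bbS^2,\ \hat\bv\perp\hat\bp\}$ (the only admissible radius is $\pi$ since $\sqrt{3}\pi<3\pi$). For this $\hat\bp$, the maximum absolute coordinate on that circle is $\pi\sqrt{2/3}<\pi$, so the entire circle sits in the \emph{open} cube. Hence $g$ attains the value $\pi$ at interior points, yet $-\bc\notin F(\sS_r)$ and $\max_{\sS_r}g<\pi$; the inequality $g(\br)\leqslant\min\{\max_{\sS_r}g,\pi\}=\max_{\sS_r}g$ fails at those interior points.

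The same example kills your alternative route: any straight segment through one of those interior maximizers to two surface points has $g=\pi$ in the middle and $g<\pi$ at both ends, so $g$ is not quasiconvex along lines. In short, neither approach closes without the unproved parenthetical, and that parenthetical is not true at the level of generality the lemma is stated. Either the statement in \cite{campbell2018globally} carries an extra hypothesis or a different surface quantity (note the $\min\{\cdot,\pi\}$ is otherwise redundant, since angles never exceed $\pi$), or the transcription here is slightly lossy; you should consult the original before trying to patch the argument.
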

\begin{lemma}
\label{lm:translation_uncertainty_angle}
(Translation uncertainty angle bound) Given a 3D point $\bp$ and a translation cuboid $\sC_t$ centered at $\bt_0$ with vertices $\sV_t$, then $\forall \bt \in \sC_t$,
\begin{align}
\angle(\bp - \bt, \bp - \bt_0) &\leqslant
\begin{dcases}
\max_{\bt \in \sV_t}\angle(\bp - \bt, \bp - \bt_0) & \text{if } \bp \notin \sC_t\\
\pi & \text{else}
\end{dcases}\nonumber\\
&\defeq \psi_{t}(\bp, \sC_t).\label{eqn:translation_uncertainty_angle}
\end{align}
\end{lemma}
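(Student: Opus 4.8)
The plan is to reduce both cases to a single statement: whenever $\bp\notin\sC_t$, the function $g(\bt)\defeq\angle(\bp-\bt,\bp-\bt_0)$ attains its maximum over the cuboid $\sC_t$ at one of its vertices $\sV_t$.

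If $\bp\in\sC_t$ the bound $\pi$ is vacuous, so assume $\bp\notin\sC_t$, and record two preliminaries. (a) For every $\bt\in\sC_t$, $g(\bt)<\pi$: otherwise $\bp-\bt=-\lambda(\bp-\bt_0)$ for some $\lambda>0$, so $\bp=\tfrac{1}{1+\lambda}\bt+\tfrac{\lambda}{1+\lambda}\bt_0$ would be a convex combination of two points of $\sC_t$, contradicting $\bp\notin\sC_t$; in particular $\bp-\bt\neq\mathbf{0}$, hence $g$ is continuous on the compact set $\sC_t$ and attains its maximum, say at $\bt^{\ast}$. (b) The map $\bt\mapsto\bp-\bt$ sends $\sC_t$ to the box $\mathrm{conv}\{\,\bp-\bt:\bt\in\sV_t\,\}$, whose centre is $\bp-\bt_0$.

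When $\alpha^{\ast}\defeq\max_{\bt\in\sV_t}g(\bt)\leqslant\tfrac{\pi}{2}$ the argument is immediate, since $\{\,\mathbf{v}\in\bbR^{3}:\angle(\mathbf{v},\bp-\bt_0)\leqslant\alpha^{\ast}\,\}$ is then a closed convex cone that, by (b), contains every $\bp-\bt$ with $\bt\in\sV_t$ and therefore their convex hull $\bp-\sC_t$. The substantive case is $\alpha^{\ast}>\tfrac{\pi}{2}$, where this cone is not convex; here I would show $\bt^{\ast}$ may be taken to be a vertex by a descent on faces. The closed-form gradient
\begin{equation*}
\nabla_{\bt}\,g(\bt)=\frac{1}{\sin g\,\lVert\bp-\bt\rVert}\Bigl(\widehat{\bp-\bt_0}-\cos g\;\widehat{\bp-\bt}\Bigr)
\end{equation*}
vanishes only when $\cos g\in\{-1,1\}$, i.e.\ $g\in\{0,\pi\}$; since $g\geqslant0$ with $g(\bt_0)=0$, while $g\not\equiv0$ for a genuine cuboid and $g=\pi$ is ruled out by (a), the maximiser $\bt^{\ast}$ cannot be interior to $\sC_t$. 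If $\bt^{\ast}$ lies in the relative interior of a $2$-face $F$ with unit normal $\mathbf{n}$, tangential stationarity of $g|_F$ forces $\widehat{\bp-\bt_0}-\cos g(\bt^{\ast})\,\widehat{\bp-\bt^{\ast}}\parallel\mathbf{n}$; since this vector is also orthogonal to $\widehat{\bp-\bt^{\ast}}$ and nonzero, $\mathbf{n}\perp\widehat{\bp-\bt^{\ast}}$, i.e.\ $\bp\in\mathrm{aff}(F)$. Projecting onto that plane, and using that $\bt_0$ projects orthogonally to the centre $\bc_F$ of $F$, one gets $\cos g(\bt)=\bigl((\bp-\bt)\cdot(\bp-\bc_F)\bigr)/\bigl(\lVert\bp-\bt\rVert\,\lVert\bp-\bt_0\rVert\bigr)$ for $\bt\in F$, so maximising $g$ over $F$ is precisely the two-dimensional instance of the lemma with $(F,\bc_F)$ in place of $(\sC_t,\bt_0)$. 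That planar instance reduces --- now inside a $2$-plane, where the stationarity argument does pin $\bp$ onto an edge's line --- to the one-dimensional base case: either $\bp$ lies on the segment's line, making $g$ constant, or it does not, in which case $\widehat{\bp-\bt}$ sweeps a monotone arc of length $<\pi$ that contains $\widehat{\bp-\bc}$ ($\bc$ the segment centre) but not its antipode, so $g$ is unimodal; either way the maximum is at an endpoint, hence at a vertex of the original cuboid.

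The step I expect to be the genuine obstacle is the one configuration this descent does not settle directly: a maximiser $\bt^{\ast}$ in the relative interior of a $1$-dimensional edge of $\sC_t$. Unlike for a $2$-face, tangential stationarity along an edge does not by itself place $\bp$ on the edge's line, so one must additionally invoke the inequality (KKT) conditions --- nonpositivity of the directional derivatives of $g$ into each of the two adjoining faces --- and show that any $\bt^{\ast}$ meeting all of them would again force $\bp\in\sC_t$, contradicting the hypothesis. Marshalling those conditions is the delicate part; the remaining ingredients (the gradient formula, the projection identities, and the planar monotonicity of the subtended angle) are routine.
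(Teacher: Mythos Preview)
The paper does not prove Lemma~\ref{lm:translation_uncertainty_angle} at all; it merely quotes it from \cite{campbell2018globally} (their Lemma~5). So there is no ``paper's own proof'' to compare against, and your sketch is being judged on its own merits.

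Your overall architecture --- rule out interior maximisers via the gradient identity, then descend through faces --- is sound, and the $2$-face reduction is correct. You also rightly isolate the edge case as the crux. However, the specific plan you propose for that case does \emph{not} go through. You suggest that the first-order KKT package at a putative maximiser $\bt^{\ast}\in\operatorname{relint}(E)$ --- tangential stationarity $w\cdot e=0$ together with $w\cdot n_{1}\leqslant0$, $w\cdot n_{2}\leqslant0$ for the inward face directions --- must force $\bp\in\sC_t$. It does not. Take $\sC_t=[-1,1]^{3}$, $\bt_0=\mathbf{0}$, $\bp=(2,0,0)\notin\sC_t$, and the edge $E=\{1\}\times\{1\}\times[-1,1]$. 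At $\bt^{\ast}=(1,1,0)$ one computes $\widehat{\bp-\bt_0}=(1,0,0)$, $\widehat{\bp-\bt^{\ast}}=(1,-1,0)/\sqrt{2}$, $\cos g=1/\sqrt{2}$, and hence $w=\widehat{\bp-\bt_0}-\cos g\,\widehat{\bp-\bt^{\ast}}=(\tfrac12,\tfrac12,0)$. This satisfies $w\cdot e_{3}=0$ and has strictly negative inner product with both inward directions $(-1,0,0)$ and $(0,-1,0)$, so every first-order KKT condition holds --- yet $\bp\notin\sC_t$. The resolution of the apparent paradox is that $\bt^{\ast}$ is a \emph{saddle}: along $E$ one has $g(1,1,z)=\arccos\bigl(1/\sqrt{2+z^{2}}\bigr)$, with a strict local minimum at $z=0$ (indeed $g(1,1,\pm1)=\arccos(1/\sqrt{3})\approx54.7^{\circ}>45^{\circ}=g(1,1,0)$). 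First-order information alone cannot distinguish this saddle from a maximum, so the KKT route as stated is a genuine gap, not merely a delicate computation.

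To repair the argument you will need either second-order information along $E$ (showing that whenever the first-order conditions hold at $\bt^{\ast}\in\operatorname{relint}(E)$ the restriction $g|_{E}$ is convex there), or an entirely different device --- for instance, an argument that on any segment $S\subset\sC_t$ the swept arc $\{\widehat{\bp-\bt}:\bt\in S\}$ never contains the in-plane antipode of $\widehat{\bp-\bt_0}$, which would give quasi-convexity of $g$ on $\sC_t$ directly and make the face descent unnecessary. Either way, the step you flagged is indeed the obstacle, but the fix you sketched is the wrong one.
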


\begin{theorem}
	\label{thm:bounds}
	(Objective function bounds) For the transformation domain $\sC_r \times \sC_t$ centered at $(\br_0, \bt_0)$, the minimum of the objective function \eqnref{eqn:objective_function} has an upper bound
	\vspace{-6pt}
	\begin{equation}
	\label{eqn:bounds_upper}
	\overbar{d} \defeq f (\bR_{\br_0}, \bt_0)
	\end{equation}
	and a lower bound
	\begin{align}
    \label{eqn:bounds_lower}
    \underbar{d} \defeq &\sum_{i = 1}^{n_1} \sum_{j = 1}^{n_1} \phi_{1i} \phi_{1j} \min_{\bt \in \sC_t} \frac{Z\left(\underbar{K}_{1i1j} (\bt)\right)}{Z(\kappa_{1i} (\bt)) Z(\kappa_{1j} (\bt))} \nonumber\\
    &- 2\sum_{i = 1}^{n_1} \sum_{j = 1}^{n_2} \phi_{1i} \phi_{2j} \max_{\bt \in \sC_t} \frac{Z\left(\overbar{K}_{1i2j} (\bt)\right)}{Z(\kappa_{1i}(\bt)) Z(\kappa_{2j})}
    \end{align}
    where
    \begin{equation}
    \label{eqn:K1i1j_lb}
    \underbar{K}_{1i1j} (\bt) = \sqrt{\kappa_{1i}^{2} (\bt) \!+\! \kappa_{1j}^{2} (\bt) \!+\! 2\kappa_{1i}(\bt) \kappa_{1j}(\bt) \cos A}
    \end{equation}
    \begin{equation}
    \label{eqn:K1i2j_ub}
    \overbar{K}_{1i2j}(\bt) = \sqrt{\kappa_{1i}^{2} (\bt) + \kappa_{2j}^{2} + 2\kappa_{1i}(\bt) \kappa_{2j} \cos B}
    \end{equation}
    \begin{align}
    \label{eqn:K1i1j_lb_A}
    A = \min \{ \pi,\, &\angle\left( \bmu_{1i}-\bt_0, \bmu_{1j}-\bt_0 \right) \nonumber\\
    &+ \psi_{t}(\bmu_{1i}, \sC_t) + \psi_{t}(\bmu_{1j}, \sC_t) \}
    \end{align}
    \begin{align}
    \label{eqn:K1i2j_ub_B}
    B = \max \{ 0,\, &\angle\left(\bmu_{1i}-\bt_0, \bR_{\br_{0}}^{-1} \hat{\bmu}_{2j} \right) \nonumber\\
    &- \psi_{t}(\bmu_{1i}, \sC_t) - \psi_{r}(\hat{\bmu}_{2j}, \sC_r) \} \text{.}
    \end{align}
\end{theorem}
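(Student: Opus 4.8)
The plan is to establish the two bounds separately, leaning on three elementary facts: every mixture weight is nonnegative; $Z(\kappa)=(e^{\kappa}-e^{-\kappa})\kappa^{-1}=2\sinh(\kappa)/\kappa$ is strictly increasing on $(0,\infty)$, which I would confirm by noting $\tfrac{\mathrm{d}}{\mathrm{d}\kappa}Z=2\kappa^{-2}(\kappa\cosh\kappa-\sinh\kappa)$ and that $\kappa\cosh\kappa-\sinh\kappa$ vanishes at $0$ with positive derivative $\kappa\sinh\kappa$ thereafter; and the angular distance $\angle(\cdot,\cdot)$ on $\bbS^2$ is a metric, invariant under rotation and under positive rescaling of its arguments, so both the triangle inequality and its reverse are available. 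The upper bound is then immediate: since the branch center $(\bR_{\br_0},\bt_0)$ lies in $\sC_r\times\sC_t$, we have $\min_{(\bR,\bt)\in\sC_r\times\sC_t}f(\bR,\bt)\leqslant f(\bR_{\br_0},\bt_0)=\overbar{d}$ (evaluating $f$ at any transformation in the branch would do; the center is the natural choice).

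For the lower bound I would start from the fact that the minimum of $f$---a difference of two double sums---is at least the minimum of the first (positive) sum over $\bt\in\sC_t$ minus the maximum of the second (including its factor of $2$) over $(\bR,\bt)\in\sC_r\times\sC_t$. Because the coefficients $\phi_{1i}\phi_{1j}$ and $\phi_{1i}\phi_{2j}$ are nonnegative, the extremum may be moved inside each sum ($\min\sum\geqslant\sum\min$ and $\max\sum\leqslant\sum\max$), so it suffices to bound a single summand. For a term of the first sum, the law of cosines gives $K_{1i1j}^{2}(\bt)=\kappa_{1i}^{2}(\bt)+\kappa_{1j}^{2}(\bt)+2\kappa_{1i}(\bt)\kappa_{1j}(\bt)\cos\theta$ with $\theta=\angle(\bmu_{1i}-\bt,\bmu_{1j}-\bt)$; inserting the centered directions $\bmu_{1i}-\bt_0$ and $\bmu_{1j}-\bt_0$, applying the triangle inequality twice, and then Lemma~\ref{lm:translation_uncertainty_angle} bounds $\theta$ above by $\angle(\bmu_{1i}-\bt_0,\bmu_{1j}-\bt_0)+\psi_t(\bmu_{1i},\sC_t)+\psi_t(\bmu_{1j},\sC_t)$, and since $\theta\in[0,\pi]$ also $\theta\leqslant A$ with $A$ as in \eqnref{eqn:K1i1j_lb_A}. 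As $\cos$ is nonincreasing on $[0,\pi]$ and $\kappa_{1i}(\bt),\kappa_{1j}(\bt)>0$, this yields $K_{1i1j}(\bt)\geqslant\underbar{K}_{1i1j}(\bt)$, hence $Z(K_{1i1j}(\bt))\geqslant Z(\underbar{K}_{1i1j}(\bt))$ while the denominator $Z(\kappa_{1i}(\bt))Z(\kappa_{1j}(\bt))$ is untouched, so the replacement survives $\min_{\bt\in\sC_t}$.

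The cross term is treated symmetrically but in the opposite direction. The law of cosines gives $K_{1i2j}^{2}(\bR,\bt)=\kappa_{1i}^{2}(\bt)+\kappa_{2j}^{2}+2\kappa_{1i}(\bt)\kappa_{2j}\cos\gamma$ with $\gamma=\angle(\bR(\bmu_{1i}-\bt),\hat{\bmu}_{2j})$; I need a \emph{lower} bound on $\gamma$, so I would rewrite $\gamma=\angle(\bmu_{1i}-\bt,\bR^{-1}\hat{\bmu}_{2j})$ and apply the reverse triangle inequality about the branch center, $\gamma\geqslant\angle(\bmu_{1i}-\bt_0,\bR_{\br_0}^{-1}\hat{\bmu}_{2j})-\angle(\bmu_{1i}-\bt,\bmu_{1i}-\bt_0)-\angle(\bR^{-1}\hat{\bmu}_{2j},\bR_{\br_0}^{-1}\hat{\bmu}_{2j})$, then bound the last two angles by $\psi_t(\bmu_{1i},\sC_t)$ and $\psi_r(\hat{\bmu}_{2j},\sC_r)$ via Lemmas~\ref{lm:translation_uncertainty_angle} and \ref{lm:rotation_uncertainty_angle}; with $\gamma\geqslant0$ this gives $\gamma\geqslant B$ for $B$ as in \eqnref{eqn:K1i2j_ub_B}. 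Since $B\in[0,\pi]$ and $\kappa_{1i}(\bt),\kappa_{2j}>0$, monotonicity of $\cos$ followed by monotonicity of $Z$ gives $Z(K_{1i2j}(\bR,\bt))\leqslant Z(\overbar{K}_{1i2j}(\bt))$, and because neither $\overbar{K}_{1i2j}$ nor the denominator depends on $\bR$, the joint $\max_{\bR\in\sC_r,\bt\in\sC_t}$ reduces to $\max_{\bt\in\sC_t}$. Collecting the positive and cross contributions produces exactly \eqnref{eqn:bounds_lower}.

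I expect the only delicate point to be the rotation step in the cross term: Lemma~\ref{lm:rotation_uncertainty_angle} is phrased for $\angle(\bR_\br\bp,\bR_{\br_0}\bp)$, whereas here the \emph{inverse} rotation appears, so I must justify that $\angle(\bR_\br^{-1}\hat{\bmu}_{2j},\bR_{\br_0}^{-1}\hat{\bmu}_{2j})\leqslant\psi_r(\hat{\bmu}_{2j},\sC_r)$ (using $\bR_\br^{-1}=\bR_{-\br}$ together with the symmetry of the rotation cube under negation through $\br_0$, or an equivalent reformulation of the uncertainty angle), and keep careful track of which direction vector each uncertainty angle is anchored to. The remaining ingredients---the orientation of each inequality, the $\min$/$\max$ interchange, and the chain from $\cos$-monotonicity through $Z$-monotonicity---are routine, and the residual algebra of expanding the $L_2$ integrals follows the same substitutions already used to prove Lemma~\ref{lm:objective_function}.
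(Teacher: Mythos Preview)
Your proposal is correct and follows essentially the same approach as the paper's proof: both establish the upper bound by evaluation at the branch center, and the lower bound by rewriting $K_{1i1j}$ and $K_{1i2j}$ via the law of cosines, bounding the resulting angles through the (reverse) spherical triangle inequality together with Lemmas~\ref{lm:rotation_uncertainty_angle} and~\ref{lm:translation_uncertainty_angle}, and then invoking the monotonicity of $Z$. You are in fact more explicit than the paper about the $\min$/$\max$ interchange with the nonnegative mixture weights and about the monotonicity of $Z$, and the subtlety you flag concerning the inverse rotation in the cross term is one the paper's proof simply glosses over.
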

\begin{proof}
	The validity of the upper bound follows from
	\vspace{-6pt}
	\begin{equation}
	\label{eqn:bounds_upper_proof}
	f (\bR_{\br_0}, \bt_0) \geqslant \min_{\substack{\br \in \sC_r\\\bt \in \sC_t}} f(\bR_{\br}, \bt).
	\end{equation}
	That is, at a specific point in the domain, the function value is greater than (or equal) to the minimum in the domain.
		For the lower bound, observe that $\forall \bt \in \sC_t$,
	\begin{align}
	K_{1i1j}(\bt) &= \sqrt{
	    \begin{aligned}
	    &2\kappa_{1i}(\bt) \kappa_{1j}(\bt) \cos \angle (\bmu_{1i} \!-\! \bt, \bmu_{1j} \!-\! \bt)\\
	    &\hspace{4pt} + \kappa_{1i}^{2} (\bt) + \kappa_{1j}^{2} (\bt)
	    \end{aligned}
	} \label{eqn:K1i1j_lb_2}\\
	&\geqslant \sqrt{\kappa_{1i}^{2} (\bt) \!+\! \kappa_{1j}^{2} (\bt) \!+\! 2\kappa_{1i}(\bt) \kappa_{1j}(\bt) \cos A } \label{eqn:K1i1j_lb_3}\\
	&= \underbar{K}_{1i1j}(\bt) \label{eqn:K1i1j_lb_4}
	\end{align}
	where \eqnref{eqn:K1i1j_lb_3} is a consequence of the spherical triangle inequality and Lemma~\ref{lm:translation_uncertainty_angle}, since
	\begin{align}
	\angle (\ba, \bbb)
	&\leqslant \angle(\bc, \bd) + \angle(\ba, \bc) + \angle(\bbb, \bd) \label{eqn:K1i1j_lb_2_1}\\
	&\leqslant \angle (\bc, \bd) + \psi_{t}(\bmu_{1i}, \sC_t) + \psi_{t}(\bmu_{1j}, \sC_t) \label{eqn:K1i1j_lb_2_2}
	\end{align}
	for $\ba = \bmu_{1i} \!-\! \bt$, $\bbb = \bmu_{1j} \!-\! \bt$, $\bc = \bmu_{1i} \!-\! \bt_0$, and $\bd = \bmu_{1j} \!-\! \bt_0$.
		Also observe that $\forall(\br,\bt) \in (\sC_r \times \sC_t)$,
	\begin{align}
	\!K_{1i2j}(\bR_{\br},\bt) \!
		&=\!\! \sqrt{
	    \begin{aligned}
	    &\!2\kappa_{1i}(\bt) \kappa_{2j} \cos \! \angle (\bmu_{1i} \!-\! \bt, \bR_{\br}^{\!-1} \hat{\bmu}_{2j})\\
	    &\hspace{4pt} + \kappa_{1i}^{2} (\bt) + \kappa_{2j}^{2}
	    \end{aligned}
	} \label{eqn:K1i2j_ub_2}\\
    &\leqslant \sqrt{\kappa_{1i}^{2} (\bt) \!+\! \kappa_{2j}^{2} \!+\! 2\kappa_{1i}(\bt) \kappa_{2j} \cos B } \label{eqn:K1i2j_ub_3}\\
	&= \overbar{K}_{1i2j}(\bt) \label{eqn:K1i2j_ub_4}
	\end{align}
	where \eqnref{eqn:K1i2j_ub_3} follows from the reverse triangle inequality in spherical geometry and Lemmas~\ref{lm:translation_uncertainty_angle} and \ref{lm:rotation_uncertainty_angle}.
		With $\underbar{K}_{1i1j}$ and $\overbar{K}_{1i2j}$, a valid lower bound for \eqnref{eqn:objective_function} can be constructed by observing that $Z(x)$ \eqnref{eqn:Z} is a monotonically increasing function for $x \geqslant 0$
		and the dependency of $\kappa_{1i}$ on $\bt$ can be optimized separately.
		See the appendix for the full proof.
\end{proof}

\section{The GOSMA Algorithm}
\label{sec:algorithm}

The Globally-Optimal Spherical Mixture Alignment (GOSMA) algorithm is outlined in \algoref{alg:gosma}. It employs a depth-first search strategy using a priority queue (\lineref{alg:priority_queue}) where the priority is inverse to the lower bound. The algorithm terminates with $\epsilon$-optimality, whereby the difference between the best function value $d^*$ and the global lower bound $\underbar{d}$ is less than $\epsilon$ (\lineref{alg:stopping_criterion}).
Branching and bounding is performed on the GPU (\lineref{alg:kernel}), with each thread computing the bounds for a single branch.

\begin{algorithm}
\begin{algorithmic}[1]
\Require GMM--vMFMM pair, threshold $\epsilon$, domain $\Omega_r \!\times\! \Omega_t$
\Ensure optimal function value $d^*$, camera pose $(\br^*,\bt^*)$
\State $d^* \gets \infty$\label{alg:d_initialisation}
\State Add domain $\Omega_r \times \Omega_t$ to priority queue $Q$
\Loop
\State Update lowest lower bound $\underbar{d}$ from $Q$
\State Remove set of hypercubes $\{\sC_i\}$ from $Q$\label{alg:priority_queue}
\State \textbf{if} $d^* - \underbar{d} \leqslant \epsilon$ \textbf{then} terminate\label{alg:stopping_criterion}
\State Evaluate $\overbar{d}_{ij}$ \& $\underbar{d}_{ij}$ in parallel for subcubes of $\{\sC_i\}$\label{alg:kernel}
\ForAll{sub-cubes $\sC_{ij} \in \{\sC_i\}$}
\State \textbf{if} $\overbar{d}_{ij} \!<\! d^*$ \textbf{then} $(d^*, \br^*, \bt^*) \gets \text{SMA}(\br_{0ij}, \bt_{0ij})$\label{alg:sma}\label{alg:sma_criterion}
\State \textbf{if} $\underbar{d}_{ij} \!<\! d^*$ \textbf{then} add $\sC_{ij}$ to queue $Q$\label{alg:push_queue}
\EndFor
\EndLoop
\end{algorithmic}
\caption{GOSMA: a globally-optimal spherical mixture alignment algorithm for camera pose estimation}
\label{alg:gosma}
\end{algorithm}

We also developed a local optimization algorithm denoted as Spherical Mixture Alignment (SMA), which was integrated into GOSMA (\lineref{alg:sma}). We use the quasi-Newton L-BFGS algorithm \cite{byrd1995limited} to minimize \eqnref{eqn:objective_function}, with the gradient derived in the appendix. SMA is run whenever the BB algorithm finds a sub-cube $\sC_{ij}$ that has an upper bound less than the best-so-far function value $d^*$ (\lineref{alg:sma_criterion}), initialized with the center transformation of $\sC_{ij}$. In this way, BB and SMA collaborate, with SMA quickly converging to the closest local minimum and BB guiding the search into the convergence basins of better local minima. SMA accelerates convergence since reducing $d^*$ early allows larger branches to be culled (\lineref{alg:push_queue}), greatly reducing the search space.

\section{Results}
\label{sec:results}

The GOSMA algorithm, denoted GS, was evaluated with respect to the baseline algorithms SoftPOSIT \cite{david2004softposit}, BlindPnP \cite{moreno2008pose}, RANSAC \cite{fischler1981random}, and GOPAC \cite{campbell2018globally}, denoted SP, BP, RS and GP respectively, using both synthetic and real data.
For RANSAC, we use the P$3$P algorithm \cite{kneip2011novel}, the OpenGV framework \cite{kneip2014opengv}, and randomly-sampled correspondences.
To generate GMMs and vMFMMs, we cluster the point-set with DP-means \cite{kulis2012revisiting} and the bearing vector set with DP-vMF-means \cite{straub2015small}, and fit maximum likelihood mixture models to the clusters. These methods automatically select a parsimonious representation that adapts to the complexity of the scene geometry. 
We report the median translation error (in metres), rotation error (degrees), and runtime (seconds) including on-the-fly mixture generation. We also report the success rate, a summary statistic defined as the fraction of experiments where the correct pose was found: an angular error less than $0.1$ radians and a relative translation error less than 5\%.
Except where otherwise specified, the normalized $L_2$ distance threshold $\epsilon$ was set to $0.1$, the point-to-camera limit $\zeta$ was set to $0.5$, the scale parameters for mixture model generation $\lambda_{p}$ and $\lambda_{f}$ were set to $0.25$m and $2^{\circ}$ respectively, and semantic information was used in the real data experiments only, with class weights $\phi_{l} = |\Lambda|^{-1}$, the inverse of the number of classes.
All experiments were run on a 3.4GHz CPU and two GeForce GTX~1080Ti GPUs, and the C\texttt{++} code is available on the first author's website.

\subsection{Synthetic Data Experiments}
\label{sec:results_synthetic}

To evaluate GOSMA under a range of perturbations, 25 independent Monte Carlo simulations were performed per parameter setting using the framework of BlindPnP \cite{moreno2008pose}: $N_{I}$ random 3D point inliers and $\omega_{\text{3D}} N_{I}$ outliers were generated from $[-1,1]^3$; the inliers were projected to a $640\times480$ virtual image with a focal length of $800$; normal noise with $\sigma = 2$ pixels was added to the 2D points; and $\omega_{\text{2D}} N_{I}$ random outlier points were added to the image. An example of the data and alignment results is shown in \figref{fig:results_3d2dbounds}.

The time evolution of the global upper and lower bounds is shown in \figref{fig:results_bounds}. The plot reveals how local and global optimization strategies collaborate to decrease the upper bound with BB guiding the search into better convergence basins and SMA jumping to the nearest local minimum (the staircase pattern). It also shows that the majority of the runtime is spent increasing the lower bound, indicating that GOSMA will often find the global optimum when terminated early, albeit without an optimality guarantee.

\begin{figure}[!t]\centering
\begin{subfigure}[]{0.32\columnwidth}\centering
    \includegraphics[trim=0pt -18pt 0pt -18pt, clip, width=\columnwidth]{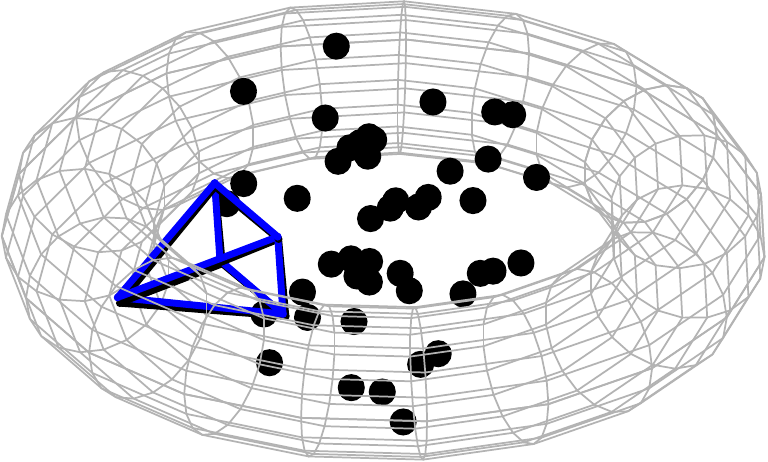}
    \subcaption{3D Results}
    \label{fig:results_3d}
\end{subfigure}\hfill
\begin{subfigure}[]{0.32\columnwidth}\centering
    \includegraphics[trim=0pt 0pt 0pt 0pt, clip, width=\columnwidth]{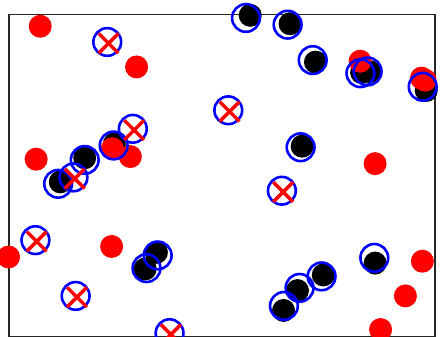}
    \subcaption{2D Results}
    \label{fig:results_2d}
\end{subfigure}\hfill
\begin{subfigure}[]{0.32\columnwidth}\centering
    \includegraphics[trim=0pt 0pt 0pt 0pt, clip, width=\columnwidth]{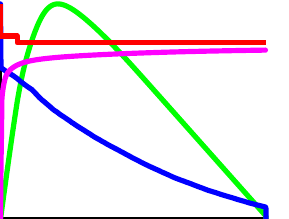}
    \subcaption{Bound Evolution}
    \label{fig:results_bounds}
\end{subfigure}
\caption{Sample 2D and 3D results for the random point data. \protect\subref{fig:results_3d}~3D points, true and GOSMA-estimated camera fulcra (completely overlapping) and toroidal pose prior. \protect\subref{fig:results_2d}~2D points (dots) and 3D points projected using the GOSMA-estimated camera pose (circles), with 2D and 3D outliers shown in red. \protect\subref{fig:results_bounds}~Evolution over time of the upper (red) and lower (magenta) bounds, remaining unexplored volume (blue) and queue size (green) as a fraction of their maximum values.}
\label{fig:results_3d2dbounds}
\end{figure}

To facilitate fair comparison with the local methods SoftPOSIT and BlindPnP, a torus pose prior was used for these experiments. It constrains the camera center to a torus around the 3D point-set with the optical axis directed towards the model \cite{moreno2008pose}. The torus prior was represented as a 50 component GMM for BlindPnP and 50 initial poses for SoftPOSIT. GOSMA and GOPAC were given a set of translation cubes that approximated the torus and were not given any rotation prior. RANSAC was set to explore correspondence space for up to 120s.
The results are shown in \figref{fig:results_synthetic}. Runtime values are clipped to an upper limit of 120s so that the scale is interpretable. GOSMA and GOPAC outperform the other methods, reliably finding the correct pose while still being relatively efficient.
While GOSMA has longer runtimes in the first two experiments, it has much better behaviour than the other methods when 2D outliers are present. For example, when $\omega_{\text{2D}}=1$, the median runtime of GOPAC (167s) is more than 30x higher than GOSMA (5s), while both always find the correct pose, with median angular errors below $1^{\circ}$ and relative translation errors below 2\%.
In fact, this random point setup significantly favors point-based methods such as GOPAC at the expense of our approach. For real surfaces, GOSMA is able to leverage its ability to adaptively compress the data, allowing it to quickly process a very large number of points.

\begin{figure}[!t]\centering
\hspace{-3pt}
\includegraphics[width=\columnwidth]{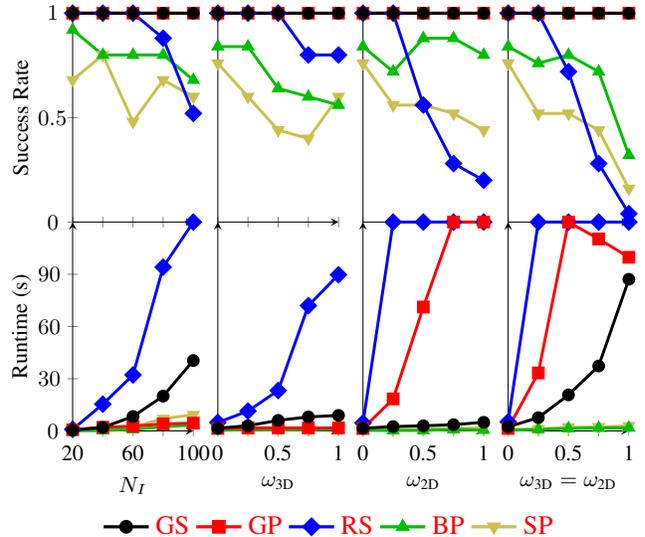}
\ref{plotlegend}
\caption{Results for the random points dataset with the torus prior. The success rates and median runtimes are plotted with respect to the number of inlier points ($N_{I}$), the fraction of additional 3D outliers ($\omega_{\text{3D}}$), 2D outliers ($\omega_{\text{2D}}$), and both, with default parameters $N_{I} = 30$ inlier points and $\omega_{\text{3D}} = \omega_{\text{2D}} = 0$, for 25 Monte Carlo simulations per parameter value.}
\label{fig:results_synthetic}
\end{figure}

\subsection{Real Data Experiments}
\label{sec:results_real}

The Stanford 2D-3D-Semantics (2D-3D-S) \cite{armeni2017joint} dataset contains panoramic images, point-sets, and semantic annotations for both modalities. It is a large indoor dataset with
approximately 1~million points per room and 8~million pixels per photo, collected using a structured-light RGBD camera.
We evaluated our algorithm on area~3 of the dataset, which contains lounges, offices and a conference room. The test data has 33 panoramic images taken from distinct camera poses where the camera is at least 50cm from any surface, and covers 13 rooms. Each room is a separate point-set, which models visibility constraints but assumes that the camera's position is known to the room level. Using this information, we set the translation domain to be the room size.
Semantic information is used by all methods in these tests: GOPAC and RANSAC use the pre-processing strategy from Campbell \etal~\cite{campbell2018globally}, selecting points and pixels from furniture classes only, whereas GOSMA uses class labels during optimization \eqnref{eqn:optimization_semantic}, making more effective use of the information. We also randomly downsample the point-sets and images to 100k points and pixels to reduce the mixture generation time.
The mixture scale parameters $\lambda_{p}$ and $\lambda_{f}$ \cite{kulis2012revisiting, straub2015small} were automatically selected to yield approximately 10 components per semantic class, 60--100 components in total.
For GOPAC, the inlier threshold $\theta$ was set to $2.5^{\circ}$ and the angular tolerance $\eta$ was set to $0.25^{\circ}$.

Qualitative and quantitative results are given in \figref{fig:results_real_indoor_office} and \tabref{tab:results_real_indoor}. Note that GOPAC and RANSAC were terminated at 900s and 120s respectively. GOSMA outperforms the other methods considerably, finding the correct pose in all cases with a small median runtime.
We also tested GOSMA without semantic labels during optimization, only during pre-processing (GS\textsuperscript{-$\Lambda$}), the same as for GOPAC and RANSAC. While this is more accurate and much faster than GOPAC, optimizing across the semantic classes provides another large accuracy and runtime gain.
We would like to emphasize the difficulty of this problem setup: the algorithm is given an image, a point-set and semantic class labels, and is required to estimate the camera pose. Compared to the synthetic data experiments, the sheer number of points and pixels, many of which are outliers, precludes the use of traditional methods.

\begin{figure}[!t]\centering
	\begin{subfigure}[]{\columnwidth}\centering
		\includegraphics[trim=110pt 105pt 60pt 95pt, clip=true, width=\columnwidth, ]{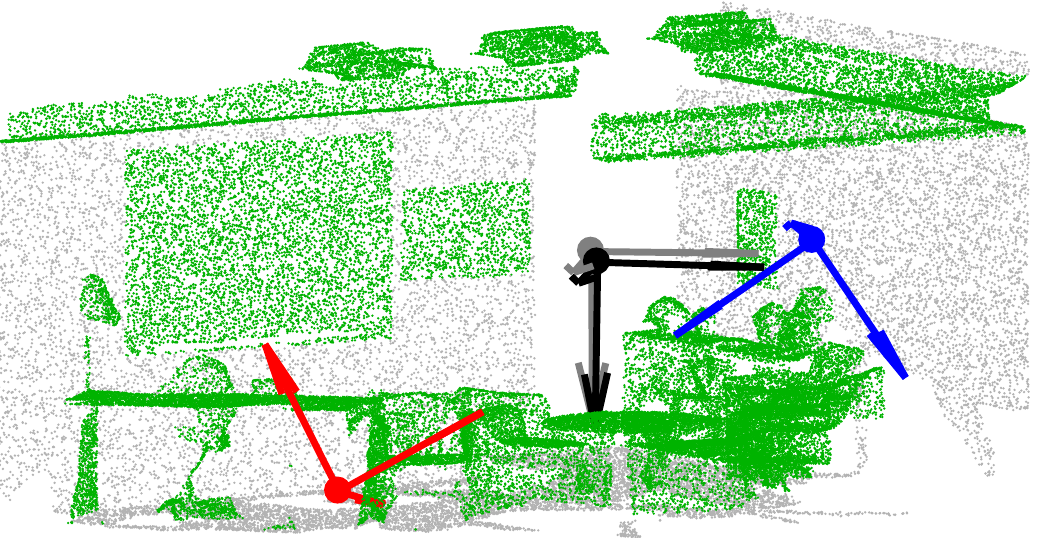}
		\subcaption{3D point-set and ground-truth (gray), GOSMA (black), GOPAC (red) and RANSAC (blue) camera poses. Object points are highlighted in green.}
		\label{fig:results_real_indoor_office_3d}	\end{subfigure}\vfill	\begin{subfigure}[]{\columnwidth}\centering
				\includegraphics[trim=70pt 130pt 70pt 110pt, clip, width=\columnwidth]{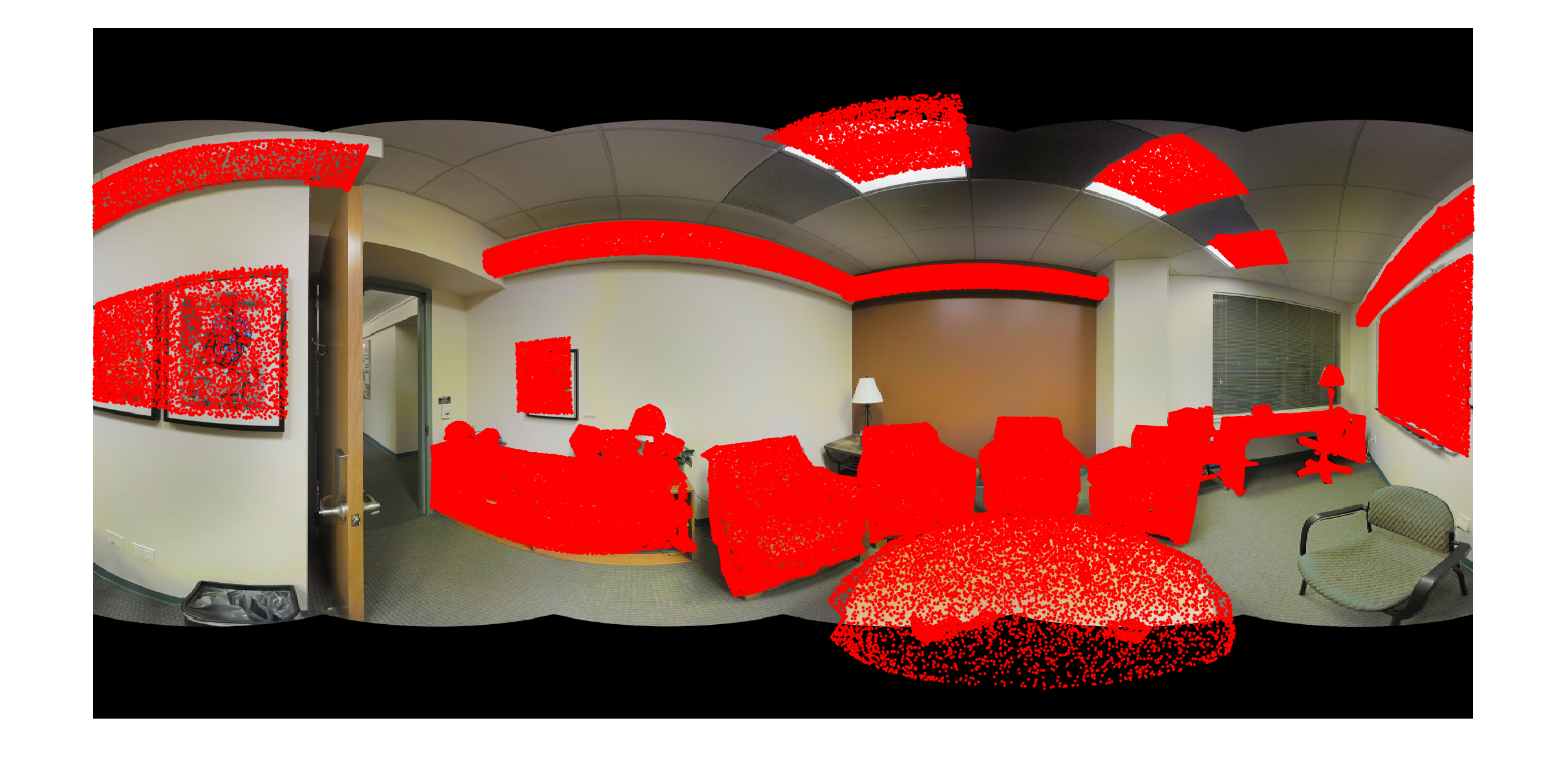}\vfill\vspace{3pt}		\includegraphics[trim=70pt 130pt 70pt 110pt, clip, width=\columnwidth]{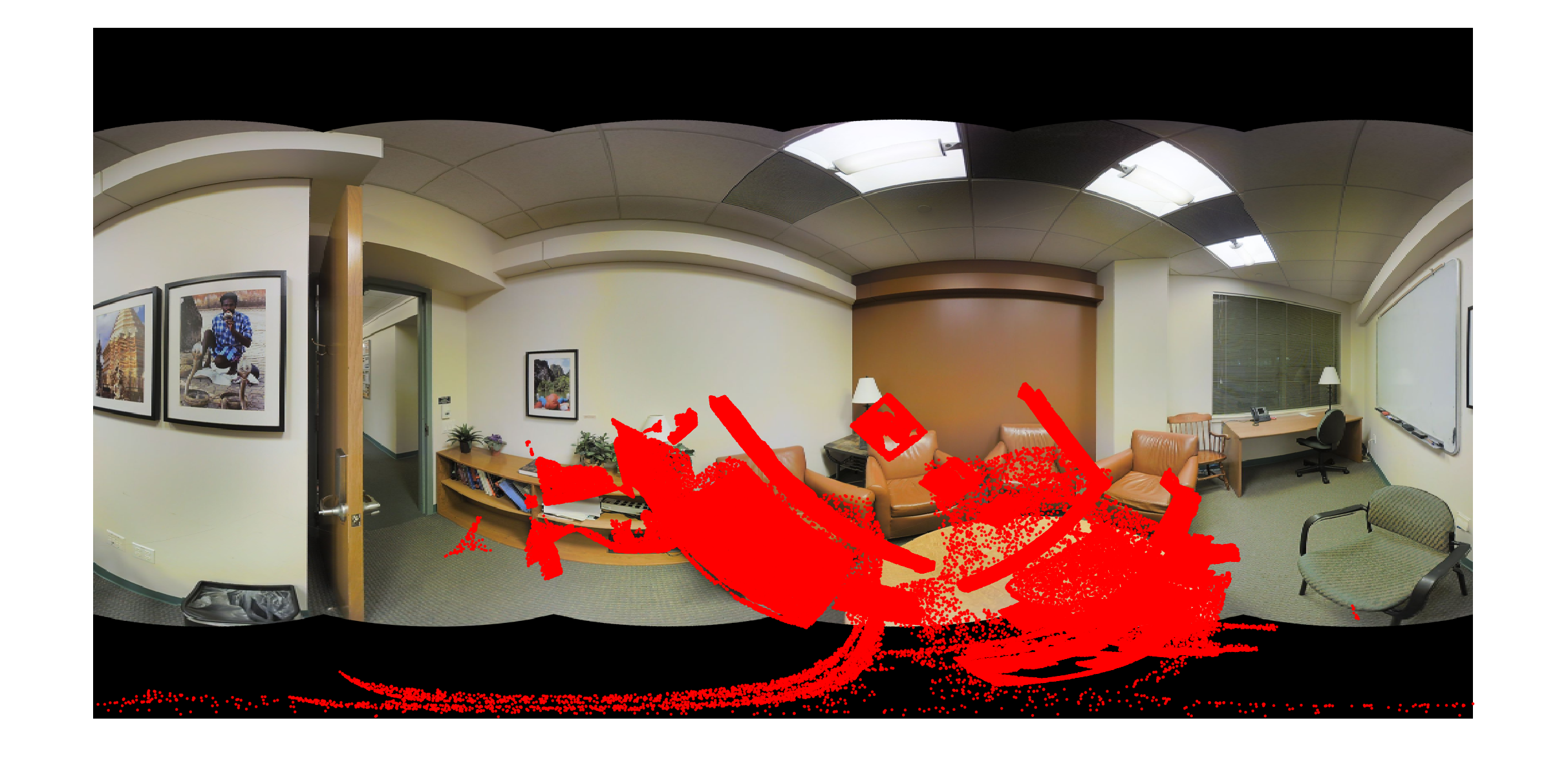}\vfill\vspace{3pt}		\includegraphics[trim=70pt 130pt 70pt 110pt, clip, width=\columnwidth]{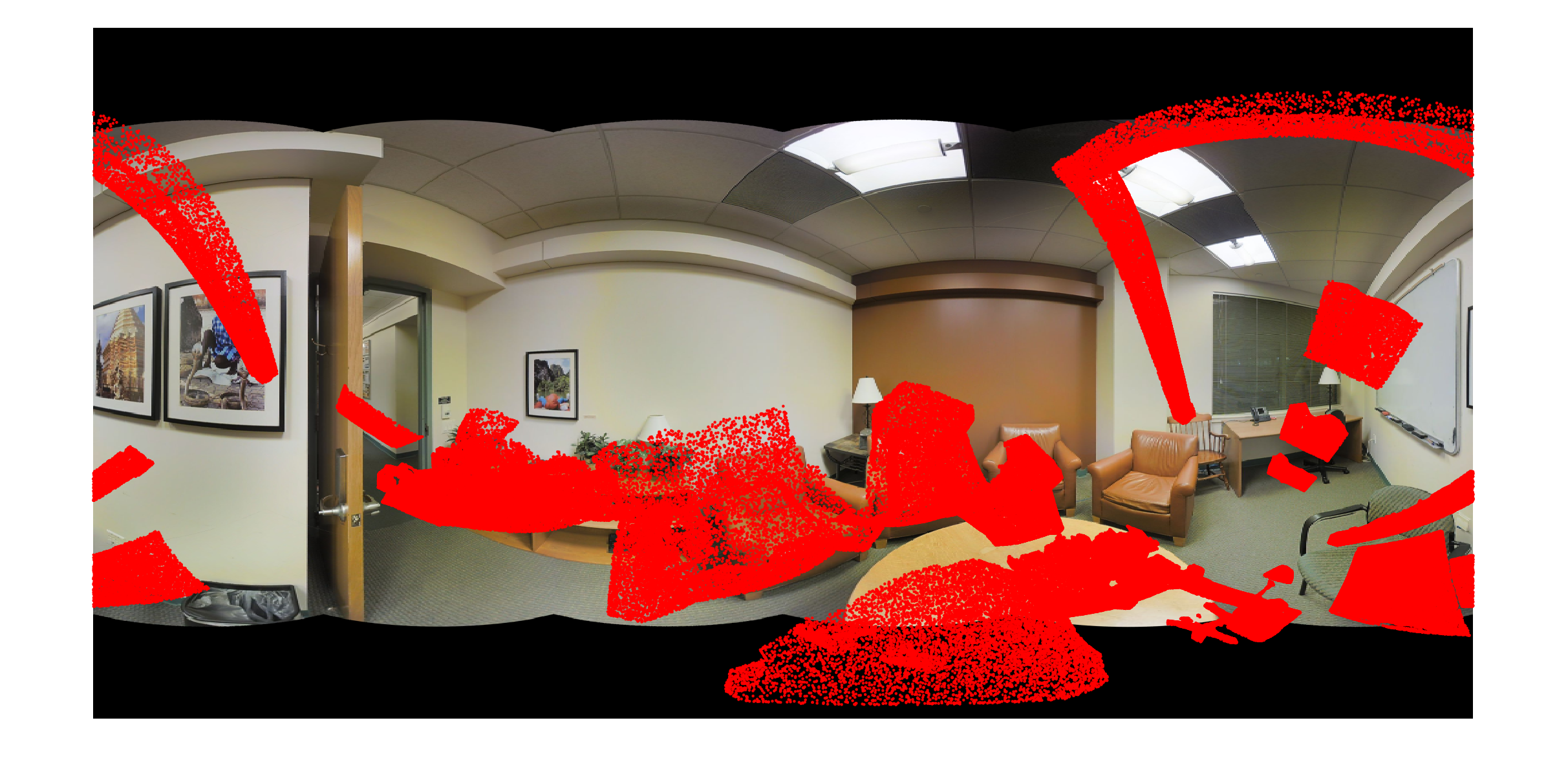}		\subcaption{3D points projected onto image using the GOSMA (top), GOPAC (middle), and RANSAC (bottom) camera poses. For clarity, only object points are plotted.}
		\label{fig:results_real_indoor_office_2d}	\end{subfigure}
	\caption{Qualitative camera pose results for office~3 of the Stanford 2D-3D-S dataset, showing the pose of the camera when capturing the image and the projection of 3D object points onto it. Only GOSMA found the correct pose as defined in this section. Best viewed in color.}
	\label{fig:results_real_indoor_office}
	\vspace{-6pt}
\end{figure}

\vspace{-5pt}
\setcounter{table}{1}
\begin{table}[!t]\centering
\caption{Camera pose results for GOSMA (GS), GOSMA without class labels during optimization (GS\textsuperscript{-$\Lambda$}), GOPAC (GP) and RANSAC (RS) for area~3 of the Stanford 2D-3D-S panoramic image dataset. Translation error, rotation error and runtime quartiles (\quartiles{\raisebox{-1pt}{$Q_1$}}{$Q_2$}{$Q_3$}) and the success rate are reported.
}
\label{tab:results_real_indoor}
\newcolumntype{C}{>{\centering\arraybackslash}X}
\renewcommand*{\arraystretch}{1.3}
\setlength{\tabcolsep}{2pt}
\begin{tabularx}{\columnwidth}{@{}l C C C C@{}}\hline
Method & GS & GS\textsuperscript{-$\Lambda$} & GP & RS\\\hline
Translation error (m) & \quartiles{0.05}{\textbf{0.08}}{0.15} & \quartiles{0.09}{0.14}{0.23} & \quartiles{0.10}{0.15}{0.27} & \quartiles{0.39}{0.56}{2.06}\\
Rotation error ($^{\circ}$) & \quartiles{0.91}{\textbf{1.13}}{2.18} & \quartiles{1.25}{2.38}{4.61} & \quartiles{2.47}{3.78}{5.10} & \quartiles{8.94}{18.3}{108}\\
Runtime (s) & \quartiles{1.4}{\textbf{1.8}}{4.4} & \quartiles{12.8}{19.1}{43.7} & \quartiles{448}{902}{902} & \quartiles{120}{120}{120}\\\hline
Success rate & \textbf{1.00} & 0.85 & 0.82 & 0.09\\\hline
\end{tabularx}
\end{table}

\section{Discussion and Conclusion}
\label{sec:conclusion}

In this paper, we have proposed a novel mixture alignment formulation for the camera pose estimation problem using the robust $L_2$ density distance on the sphere. Furthermore, we have developed a novel algorithm to minimize this distance using branch-and-bound, guaranteeing optimality regardless of initialisation.
To accelerate convergence, a local optimization algorithm was developed and integrated, GPU bound computations were implemented, and a principled way to incorporate side information such as semantic labels was devised.
The algorithm found the global optimum reliably on challenging datasets, outperforming other local and global methods.

This approach has several limitations, however.
Firstly, it scales quadratically with the number of mixture components, which scales with surface complexity.
Secondly, it is unable to resolve certain degenerate poses, such as when a wall fills the field-of-view of the camera. In this case, many camera poses satisfy the 2D information.
Thirdly, it does not use a geometric objective function, which reduces its interpretability.
A robust objective function in the image space such as intersection-over-union would be preferred, although it is not tractable for mixtures on the sphere.
Finally, the quality of its pose estimate depends on how well the mixtures represent the physical and projected surfaces in the real scene and image. While they can represent these surfaces arbitrarily accurately, the number of components is limited by practical considerations. Anisotropic densities would be preferred for this reason, however only isotropic densities, which model surfaces as points with a symmetric field of influence, have a tractable closed form on the sphere.
Hence, further investigation is warranted into aligning representations that model surfaces with fewer parameters, such as wireframes or meshes.

{\small
\bibliographystyle{ieee}

\begin{thebibliography}{10}\itemsep=-1pt

\bibitem{armeni2017joint}
I.~Armeni, A.~Sax, A.~R. Zamir, and S.~Savarese.
\newblock Joint 2{D}-3{D}-semantic data for indoor scene understanding.
\newblock {\em ArXiv e-prints}, Feb. 2017.

\bibitem{ask2013optimal}
E.~Ask, O.~Enqvist, and F.~Kahl.
\newblock Optimal geometric fitting under the truncated ${L}_2$-norm.
\newblock In {\em Proceedings of the 2013 Conference on Computer Vision and
  Pattern Recognition}, pages 1722--1729. IEEE, June 2013.

\bibitem{aubry2014seeing}
M.~Aubry, D.~Maturana, A.~A. Efros, B.~C. Russell, and J.~Sivic.
\newblock Seeing {3D} chairs: exemplar part-based {2D-3D} alignment using a
  large dataset of {CAD} models.
\newblock In {\em Proceedings of the 2014 Conference on Computer Vision and
  Pattern Recognition}, pages 3762--3769. IEEE, June 2014.

\bibitem{baka2014oriented}
N.~Baka, C.~Metz, C.~J. Schultz, R.-J. van Geuns, W.~J. Niessen, and T.~van
  Walsum.
\newblock Oriented {G}aussian mixture models for nonrigid {2D/3D} coronary
  artery registration.
\newblock {\em IEEE Transactions on Medical Imaging}, 33(5):1023--1034, 2014.

\bibitem{brachmann2017dsac}
E.~Brachmann, A.~Krull, S.~Nowozin, J.~Shotton, F.~Michel, S.~Gumhold, and
  C.~Rother.
\newblock {DSAC} -- {D}ifferentiable {RANSAC} for camera localization.
\newblock In {\em Proceedings of the 2017 Conference on Computer Vision and
  Pattern Recognition}, pages 2492--2500, July 2017.

\bibitem{breuel2003implementation}
T.~M. Breuel.
\newblock Implementation techniques for geometric branch-and-bound matching
  methods.
\newblock {\em Computer Vision and Image Understanding}, 90(3):258--294, June
  2003.

\bibitem{brown2015globally}
M.~Brown, D.~Windridge, and J.-Y. Guillemaut.
\newblock Globally optimal {2D-3D} registration from points or lines without
  correspondences.
\newblock In {\em Proceedings of the 2015 International Conference on Computer
  Vision}, pages 2111--2119, Dec. 2015.

\bibitem{byrd1995limited}
R.~H. Byrd, P.~Lu, J.~Nocedal, and C.~Zhu.
\newblock A limited memory algorithm for bound constrained optimization.
\newblock {\em SIAM Journal on Scientific Computing}, 16(5):1190--1208, 1995.

\bibitem{campbell2015adaptive}
D.~Campbell and L.~Petersson.
\newblock An adaptive data representation for robust point-set registration and
  merging.
\newblock In {\em Proceedings of the 2015 International Conference on Computer
  Vision}, pages 4292--4300. IEEE, Dec. 2015.

\bibitem{campbell2016gogma}
D.~Campbell and L.~Petersson.
\newblock {GOGMA}: {G}lobally-{O}ptimal {G}aussian {M}ixture {A}lignment.
\newblock In {\em Proceedings of the 2016 Conference on Computer Vision and
  Pattern Recognition}, pages 5685--5694. IEEE, June 2016.

\bibitem{campbell2017globally}
D.~Campbell, L.~Petersson, L.~Kneip, and H.~Li.
\newblock Globally-optimal inlier set maximisation for simultaneous camera pose
  and feature correspondence.
\newblock In {\em Proceedings of the 2017 International Conference on Computer
  Vision}, pages 1--10. IEEE, Oct. 2017.

\bibitem{campbell2018globally}
D.~Campbell, L.~Petersson, L.~Kneip, and H.~Li.
\newblock Globally-optimal inlier set maximisation for camera pose and
  correspondence estimation.
\newblock {\em IEEE Transactions on Pattern Analysis and Machine Intelligence},
  preprint, June 2018.

\bibitem{chen2018encoder}
L.-C. Chen, Y.~Zhu, G.~Papandreou, F.~Schroff, and H.~Adam.
\newblock Encoder-decoder with atrous separable convolution for semantic image
  segmentation.
\newblock In {\em Proceedings of the 2018 European Conference on Computer
  Vision}, Sept. 2018.

\bibitem{cheung2003visual}
G.~K. Cheung, S.~Baker, and T.~Kanade.
\newblock Visual hull alignment and refinement across time: A {3D}
  reconstruction algorithm combining shape-from-silhouette with stereo.
\newblock In {\em Proceedings of the 2003 Conference on Computer Vision and
  Pattern Recognition}, volume~2, pages II--375. IEEE, June 2003.

\bibitem{chui2000feature}
H.~Chui and A.~Rangarajan.
\newblock A feature registration framework using mixture models.
\newblock In {\em Proceedings of the 2000 Workshop on Mathematical Methods in
  Biomedical Image Analysis}, pages 190--197. IEEE, June 2000.

\bibitem{david2004softposit}
P.~David, D.~Dementhon, R.~Duraiswami, and H.~Samet.
\newblock {SoftPOSIT:} simultaneous pose and correspondence determination.
\newblock {\em International Journal of Computer Vision}, 59(3):259--284, 2004.

\bibitem{devroye1987course}
L.~Devroye.
\newblock {\em A course in density estimation}.
\newblock Progress in Probability and Statistics. Birkh{\"a}user Boston Inc.,
  1987.

\bibitem{enqvist2015tractable}
O.~Enqvist, E.~Ask, F.~Kahl, and K.~{\AA}str{\"o}m.
\newblock Tractable algorithms for robust model estimation.
\newblock {\em International Journal of Computer Vision}, 112(1):115--129,
  2015.

\bibitem{enqvist2008robust}
O.~Enqvist and F.~Kahl.
\newblock Robust optimal pose estimation.
\newblock In {\em Proceedings of the 2008 European Conference on Computer
  Vision}, pages 141--153. Springer, Oct. 2008.

\bibitem{fischler1981random}
M.~A. Fischler and R.~C. Bolles.
\newblock Random sample consensus: a paradigm for model fitting with
  applications to image analysis and automated cartography.
\newblock {\em Communications of the ACM}, 24(6):381--395, 1981.

\bibitem{fisher1953dispersion}
R.~Fisher.
\newblock Dispersion on a sphere.
\newblock In {\em Proceedings of the Royal Society of London A: Mathematical,
  Physical and Engineering Sciences}, volume 217, pages 295--305. Royal
  Society, May 1953.

\bibitem{gopal2014mises}
S.~Gopal and Y.~Yang.
\newblock Von {M}ises-{F}isher clustering models.
\newblock In T.~Jebara and E.~P. Xing, editors, {\em Proceedings of the 31st
  International Conference on Machine Learning}, volume~32 of {\em Proceedings
  of Machine Learning Research}, pages 154--162. PMLR, June 2014.

\bibitem{grimson1990object}
W.~E.~L. Grimson.
\newblock {\em Object Recognition by Computer: The Role of Geometric
  Constraints}.
\newblock MIT Press, Cambridge, MA, USA, 1990.

\bibitem{hartley2009global}
R.~I. Hartley and F.~Kahl.
\newblock Global optimization through rotation space search.
\newblock {\em International Journal of Computer Vision}, 82(1):64--79, Apr.
  2009.

\bibitem{hesch2011direct}
J.~A. Hesch and S.~I. Roumeliotis.
\newblock A direct least-squares {(DLS)} method for {PnP}.
\newblock In {\em Proceedings of the 2011 International Conference on Computer
  Vision}, pages 383--390. IEEE, Nov. 2011.

\bibitem{huang2018recurrent}
Q.~Huang, W.~Wang, and U.~Neumann.
\newblock Recurrent slice networks for {3D} segmentation of point clouds.
\newblock In {\em Proceedings of the 2018 IEEE Conference on Computer Vision
  and Pattern Recognition}, pages 2626--2635, 2018.

\bibitem{jian2011robust}
B.~Jian and B.~C. Vemuri.
\newblock Robust point set registration using {G}aussian mixture models.
\newblock {\em IEEE Transactions on Pattern Analysis and Machine Intelligence},
  33(8):1633--1645, 2011.

\bibitem{kendall2017geometric}
A.~Kendall and R.~Cipolla.
\newblock Geometric loss functions for camera pose regression with deep
  learning.
\newblock In {\em Proceedings of the 2017 Conference on Computer Vision and
  Pattern Recognition}, pages 6555--6564, July 2017.

\bibitem{kendall2015posenet}
A.~Kendall, M.~Grimes, and R.~Cipolla.
\newblock Pose{N}et: A convolutional network for real-time 6-{DOF} camera
  relocalization.
\newblock In {\em Proceedings of the 2015 International Conference on Computer
  Vision}, pages 2938--2946, Dec. 2015.

\bibitem{kent1982fisher}
J.~T. Kent.
\newblock The {F}isher--{B}ingham distribution on the sphere.
\newblock {\em Journal of the Royal Statistical Society. Series B
  (Methodological)}, pages 71--80, 1982.

\bibitem{kneip2014opengv}
L.~Kneip and P.~Furgale.
\newblock {OpenGV}: A unified and generalized approach to real-time calibrated
  geometric vision.
\newblock In {\em Proceedings of the 2014 International Conference on Robotics
  and Automation}, pages 1--8. IEEE, June 2014.

\bibitem{kneip2011novel}
L.~Kneip, D.~Scaramuzza, and R.~Siegwart.
\newblock A novel parametrization of the perspective-three-point problem for a
  direct computation of absolute camera position and orientation.
\newblock In {\em Proceedings of the 2011 Conference on Computer Vision and
  Pattern Recognition}, pages 2969--2976. IEEE, June 2011.

\bibitem{kneip2015sdicp}
L.~Kneip, Z.~Yi, and H.~Li.
\newblock {SDICP}: Semi-dense tracking based on iterative closest points.
\newblock In X.~Xie, M.~W. Jones, and G.~K.~L. Tam, editors, {\em Proceedings
  of the 2015 British Machine Vision Conference}, pages 100.1--100.12. BMVA
  Press, Sept. 2015.

\bibitem{kulis2012revisiting}
B.~Kulis and M.~I. Jordan.
\newblock Revisiting k-means: New algorithms via {B}ayesian nonparametrics.
\newblock In {\em Proceedings of the 29th International Conference on Machine
  Learning}, pages 1131--1138. Omnipress, 2012.

\bibitem{land1960automatic}
A.~H. Land and A.~G. Doig.
\newblock An automatic method of solving discrete programming problems.
\newblock {\em Econometrica: Journal of the Econometric Society}, pages
  497--520, 1960.

\bibitem{landrieu2018large}
L.~Landrieu and S.~Martin.
\newblock Large-scale point cloud semantic segmentation with superpoint graphs.
\newblock In {\em Proceedings of the 2018 IEEE Conference on Computer Vision
  and Pattern Recognition}, Salt Lake City, USA, June 2018.

\bibitem{lepetit2009epnp}
V.~Lepetit, F.~Moreno-Noguer, and P.~Fua.
\newblock {EPnP}: An accurate {O(n)} solution to the {PnP} problem.
\newblock {\em International Journal of Computer Vision}, 81(2):155--166, 2009.

\bibitem{li20073d}
H.~Li and R.~Hartley.
\newblock The {3D-3D} registration problem revisited.
\newblock {\em Proceedings of the 2007 International Conference on Computer
  Vision}, pages 1--8, Oct. 2007.

\bibitem{li2012worldwide}
Y.~Li, N.~Snavely, D.~Huttenlocher, and P.~Fua.
\newblock Worldwide pose estimation using {3D} point clouds.
\newblock In {\em Proceedings of the 2012 European Conference on Computer
  Vision}, pages 15--29. Springer-Verlag, Oct. 2012.

\bibitem{lowe2004distinctive}
D.~G. Lowe.
\newblock Distinctive image features from scale-invariant keypoints.
\newblock {\em International Journal of Computer Vision}, 60(2):91--110, 2004.

\bibitem{marchand2016pose}
E.~Marchand, H.~Uchiyama, and F.~Spindler.
\newblock Pose estimation for augmented reality: a hands-on survey.
\newblock {\em IEEE Transactions on Visualization and Computer Graphics},
  22(12):2633--2651, 2016.

\bibitem{mardia1972statistics}
K.~Mardia.
\newblock {\em Statistics of Directional Data}.
\newblock Probability and Mathematical Statistics. Academic Press, 1972.

\bibitem{moreno2008pose}
F.~Moreno-Noguer, V.~Lepetit, and P.~Fua.
\newblock Pose priors for simultaneously solving alignment and correspondence.
\newblock In {\em Proceedings of the 2008 European Conference on Computer
  Vision}, pages 405--418. Springer, Oct. 2008.

\bibitem{olson2001general}
C.~F. Olson.
\newblock A general method for geometric feature matching and model extraction.
\newblock {\em International Journal of Computer Vision}, 45(1):39--54, 2001.

\bibitem{olsson2009branch}
C.~Olsson, F.~Kahl, and M.~Oskarsson.
\newblock Branch-and-bound methods for {E}uclidean registration problems.
\newblock {\em IEEE Transactions on Pattern Analysis and Machine Intelligence},
  31(5):783--794, 2009.

\bibitem{pukkila1988pattern}
T.~M. Pukkila and C.~R. Rao.
\newblock Pattern recognition based on scale invariant discriminant functions.
\newblock {\em Information Sciences}, 45(3):379--389, 1988.

\bibitem{ronneberger2015unet}
O.~Ronneberger, P.~Fischer, and T.~Brox.
\newblock U-net: Convolutional networks for biomedical image segmentation.
\newblock In {\em Proceedings of the 2015 International Conference on Medical
  Image Computing and Computer-Assisted Intervention}, pages 234--241.
  Springer, 2015.

\bibitem{rota2018inplace}
S.~Rota~Bul\`o, L.~Porzi, and P.~Kontschieder.
\newblock In-place activated batchnorm for memory-optimized training of {DNN}s.
\newblock In {\em Proceedings of the 2018 IEEE Conference on Computer Vision
  and Pattern Recognition}, pages 5639--5647, Salt Lake City, USA, June 2018.

\bibitem{sattler2011fast}
T.~Sattler, B.~Leibe, and L.~Kobbelt.
\newblock Fast image-based localization using direct {2D}-to-{3D} matching.
\newblock In {\em Proceedings of the 2011 International Conference on Computer
  Vision}, pages 667--674. IEEE, Nov. 2011.

\bibitem{sattler2017efficient}
T.~Sattler, B.~Leibe, and L.~Kobbelt.
\newblock Efficient effective prioritized matching for large-scale image-based
  localization.
\newblock {\em IEEE Transactions on Pattern Analysis and Machine Intelligence},
  39(9):1744--1756, Sept. 2017.

\bibitem{scott2001parametric}
D.~W. Scott.
\newblock Parametric statistical modeling by minimum integrated square error.
\newblock {\em Technometrics}, 43(3):274--285, 2001.

\bibitem{shotton2013scene}
J.~Shotton, B.~Glocker, C.~Zach, S.~Izadi, A.~Criminisi, and A.~Fitzgibbon.
\newblock Scene coordinate regression forests for camera relocalization in
  {RGB-D} images.
\newblock In {\em Proceedings of the 2013 Conference on Computer Vision and
  Pattern Recognition}, pages 2930--2937. IEEE, June 2013.

\bibitem{straub2015small}
J.~Straub, T.~Campbell, J.~P. How, and J.~W. Fisher.
\newblock Small-variance nonparametric clustering on the hypersphere.
\newblock In {\em Proceedings of the 2015 Conference on Computer Vision and
  Pattern Recognition}, pages 334--342. IEEE, June 2015.

\bibitem{straub2017efficient}
J.~Straub, T.~Campbell, J.~P. How, and J.~W. Fisher~III.
\newblock Efficient global point cloud alignment using {Bayesian} nonparametric
  mixtures.
\newblock In {\em Proceedings of the 2017 Conference on Computer Vision and
  Pattern Recognition}, pages 2403--2412. IEEE, July 2017.

\bibitem{svarm2016city}
L.~Sv{\"a}rm, O.~Enqvist, F.~Kahl, and M.~Oskarsson.
\newblock City-scale localization for cameras with known vertical direction.
\newblock {\em IEEE Transactions on Pattern Analysis and Machine Intelligence},
  39(7):1455--1461, 2016.

\bibitem{tatarchenko2018tangent}
M.~Tatarchenko, J.~Park, V.~Koltun, and Q.-Y. Zhou.
\newblock Tangent convolutions for dense prediction in {3D}.
\newblock In {\em Proceedings of the 2018 IEEE Conference on Computer Vision
  and Pattern Recognition}, pages 3887--3896, Salt Lake City, USA, June 2018.

\bibitem{tsin2004correlation}
Y.~Tsin and T.~Kanade.
\newblock A correlation-based approach to robust point set registration.
\newblock In {\em Proceedings of the 2004 European Conference on Computer
  Vision}, pages 558--569. Springer, May 2004.
\newblock Point-set available at
  \url{http://www.cs.cmu.edu/~ytsin/KCReg/KCReg.zip}.

\bibitem{wang2013directional}
F.~Wang and A.~E. Gelfand.
\newblock Directional data analysis under the general projected normal
  distribution.
\newblock {\em Statistical Methodology}, 10(1):113--127, 2013.

\bibitem{watson1983statistics}
G.~Watson.
\newblock {\em Statistics on Spheres}, volume~6 of {\em University of Arkansas
  Lecture Notes in the Mathematical Sciences}.
\newblock Wiley, 1983.

\bibitem{yang2016goicp}
J.~Yang, H.~Li, D.~Campbell, and Y.~Jia.
\newblock {Go-ICP}: A globally optimal solution to {3D} {ICP} point-set
  registration.
\newblock {\em IEEE Transactions on Pattern Analysis and Machine Intelligence},
  38(11):2241--2254, Nov. 2016.

\bibitem{zeisl2015camera}
B.~Zeisl, T.~Sattler, and M.~Pollefeys.
\newblock Camera pose voting for large-scale image-based localization.
\newblock In {\em Proceedings of the 2015 International Conference on Computer
  Vision}, pages 2704--2712. IEEE, Dec. 2015.

\end{thebibliography}

}

\end{document}